\newtheorem{theorem}{Theorem}
\newtheorem{lemma}{Lemma}
\newtheorem{definition}{Definition}
\newtheorem{remark}{Remark}
\newenvironment{proof}{\paragraph{Proof:}}{\hfill$\square$}
\newcommand{\todo}[2]{\textcolor{red}{TODO(#1): #2}}
\begin{document}

%

%

\twocolumn[
\aistatstitle{Structured adaptive and random spinners for fast machine learning computations}

\vspace{-0.2in}
\aistatsauthor{ Mariusz Bojarski\footnotemark[1] \And Anna Choromanska$^1$ \And Krzysztof Choromanski$^1$}

\aistatsaddress{ NVIDIA\\ mbojarski@nvidia.com \And Courant Inst. of Math. Sciences, NYU\\ achoroma@cims.nyu.edu \And Google Brain Robotics\\ kchoro@google.com } 
\vspace{-0.25in}
\aistatsauthor{Francois Fagan$^1$ \And Cedric Gouy-Pailler$^1$ \And Anne Morvan$^1$ }

\aistatsaddress{ Columbia University\\ ff2316@columbia.edu \And CEA, LIST, LADIS\\ cedric.gouy-pailler@cea.fr \And CEA, LIST, LADIS \\ and Universite Paris-Dauphine\\ anne.morvan@cea.fr } 
\vspace{-0.25in}
\aistatsauthor{Nourhan Sakr$^{1,}$\footnotemark[2] \And Tamas Sarlos$^1$ \And Jamal Atif 
}

\aistatsaddress{ Columbia University\\ n.sakr@columbia.edu \And Google Research\\ stamas@google.com \And Universite Paris-Dauphine\\ jamal.atif@dauphine.fr } 
]

\footnotetext[1]{equal contribution}
\footnotetext[2]{partly supported by NSF grant CCF-1421161}

\begin{abstract}
We consider an efficient computational framework for speeding up several machine learning algorithms with almost no loss of accuracy.  The 
proposed framework relies on projections via structured matrices that we call \textit{Structured Spinners}, which are formed as products of three structured matrix-blocks that incorporate rotations. The approach is highly generic, i.e. i) structured matrices under consideration can either be fully-randomized or learned, ii) our structured family contains as special cases all previously considered structured schemes, iii) the setting extends to the non-linear case where the projections are followed by non-linear functions, and iv) the method finds numerous applications including kernel approximations via random feature maps, dimensionality reduction algorithms, new fast cross-polytope LSH techniques, deep learning, convex optimization algorithms via Newton sketches, quantization with random projection trees, and more. The proposed framework comes with theoretical guarantees characterizing the capacity of the structured model in reference to its unstructured counterpart and is based on a general theoretical principle that we describe in the paper. As a consequence of our theoretical analysis, we provide the first theoretical guarantees for one of the most efficient existing LSH algorithms based on the $\textbf{HD}_{3}\textbf{HD}_{2}\textbf{HD}_{1}$ structured matrix \cite{indyk2015}. The exhaustive experimental evaluation confirms the accuracy and efficiency of structured spinners for a variety of different applications.
\end{abstract}

\section{Introduction}

A striking majority of machine learning frameworks performs projections of input data via matrices of parameters, where the obtained projections are often passed to a possibly highly nonlinear function. In the case of randomized machine learning algorithms, the projection matrix is typically Gaussian with i.i.d. entries taken from $\mathcal{N}(0,1)$. Otherwise, it is learned through the optimization scheme. A plethora of machine learning algorithms admits this form. In the randomized setting, a few examples include variants of the Johnson-Lindenstrauss Transform applying random projections to reduce data dimensionality while approximately preserving Euclidean distance ~\cite{ailon2006approximate, liberty_jlt, ailon2013}, kernel approximation techniques based on random feature maps produced from linear projections with Gaussian matrices followed by nonlinear mappings \cite{rahimi}, \cite{le2013fastfood, chor_sind_2016, Huang_Kernel_DNN_ICASSP2014}, \cite{choromanska2015binary}, LSH-based schemes~\cite{v008a014,charikar, terasawa}, including the fastest known variant of the cross-polytope LSH~\cite{indyk2015}, algorithms solving convex optimization problems with random sketches of Hessian matrices \cite{pilanci, pilanci2}, quantization techniques using random projection trees, where splitting in each node is determined by a projection of data onto Gaussian direction \cite{dasgupta08}, and many more. 

The classical example of machine learning nonlinear models where linear projections are learned is a multi-layered neural network~\cite{0483bd9444a348c8b59d54a190839ec9, Goodfellow-et-al-2016-Book}, where the operations of linear projection via matrices with learned parameters followed by the pointwise nonlinear feature transformation are the building blocks of the network's architecture. These two operations are typically stacked multiple times to form a deep network. 

The computation of projections takes $\Theta(mn|\mathcal{X}|)$ time, where $m\times n$ is the size of the projection matrix, and $|\mathcal{X}|$ denotes the number of data samples from a dataset $\mathcal{X}$. In case of high-dimensional data, this comprises  a significant fraction of the overall computational time, while storing the projection matrix frequently becomes a bottleneck in terms of space complexity. 

In this paper, we propose the remedy for both problems, which relies on replacing the aforementioned algorithms by their ``structured variants''. The projection is performed by applying a structured matrix
from the family that we introduce as \textit{Structured Spinners}. 
Depending on the setting, the structured matrix is either learned or its parameters are taken from a random distribution (either continuous or discrete if further compression is required). Each structured spinner is a product of three matrix-blocks that incoporate rotations.
A notable member of this family is a matrix of the form $\textbf{HD}_{3}\textbf{HD}_{2}\textbf{HD}_{1}$, where $\textbf{D}_{i}$s are either random diagonal $\pm 1$-matrices or adaptive diagonal matrices and $\textbf{H}$ is the Hadamard matrix. This matrix is used in the fastest known cross-polytope LSH method introduced in \cite{indyk2015}.

In the structured case, the computational speedups are significant, i.e. projections can be calculated in $o(mn)$ time, often in $O(n\log m)$ time if Fast Fourier Transform techniques are applied. At the same time, using matrices from the family of structured spinners leads to the reduction of space complexity to sub-quadratic, usually at most linear, or sometimes even constant. 

The key contributions of this paper are: 
\begin{itemize}
\item The family of structured spinners providing a highly parametrized class of structured methods and, as we show in this paper, with applications in various randomized settings such as: kernel approximations via random feature maps, dimensionality reduction algorithms, new fast cross-polytope LSH techniques, deep learning, convex optimization algorithms via Newton sketches, quantization with random projection trees, and more.
\item A comprehensive theoretical explanation of the effectiveness of the structured approach based on structured spinners. Such analysis was provided in the literature before for a strict subclass of a very general family of structured matrices that we consider in this paper, i.e.  the proposed family of structured spinners contains all previously considered structured matrices as special cases, including the recently introduced $P$-model \cite{chor_sind_2016}. To the best of our knowledge, we are the first to theoretically explain the effectiveness of structured neural network architectures. Furthermore, we provide first theoretical guarantees for a wide range of discrete structured transforms, in particular for the fastest known cross-polytope LSH method \cite{indyk2015} based $\textbf{HD}_{3}\textbf{HD}_{2}\textbf{HD}_{1}$ discrete matrices.
\end{itemize}

Our theoretical methods in the random setting apply the relatively new Berry-Esseen type Central Limit Theorem results for random vectors.

Our theoretical findings are supported by empirical evidence regarding the accuracy and efficiency of structured spinners in a wide range of different applications. Not only do structured spinners cover all already existing structured transforms as special instances, but also many other structured matrices that can be applied in all aforementioned applications.


\section{Related work}
\label{sec:related_work}

This paper focuses on structured matrices, which were previously explored in the literature mostly in the context of the Johnson-Lindenstrauss Transform (JLT)~\cite{johnson84extensionslipschitz}, where the high-dimensional data is linearly transformed and embedded into a much lower dimensional space while approximately preserving the Euclidean distance between data points. Several extensions of JLT have been proposed, e.g.~\cite{liberty_jlt, ailon2013, ailon2006approximate,vybiral2011variant}. Most of these structured constructions involve sparse~\cite{ailon2006approximate,Dasgupta:2010:SJL:1806689.1806737} or circulant matrices~\cite{vybiral2011variant,journals/rsa/HinrichsV11} providing computational speedups and space compression.

More recently, the so-called $\Psi$-regular structured matrices (Toeplitz and circulant matrices belong to this wider family of matrices) were used to approximate angular distances~\cite{choromanska2015binary} and signed Circulant Random Matrices were used to approximate Gaussian kernels~\cite{feng}. Another work~\cite{chor_sind_2016} applies structured matrices coming from the so-called \textit{P-model}, which further generalizes the $\Psi$-regular family, to speed up random feature map computations of some special kernels (angular, arc-cosine and Gaussian). These techniques did not work for discrete structured constructions, such as the $\textbf{HD}_{3}\textbf{HD}_{2}\textbf{HD}_{1}$ matrices, or their
direct non-discrete modifications, since they require matrices with low (polylog) chromatic number of the corresponding coherence graphs.

Linear projections are used in the LSH setting to construct codes for given datapoints which  speed up such tasks as approximate nearest neighbor search. A notable set of methods are the so-called cross-polytope techniques introduced in \cite{terasawa} and their aforementioned discrete structured variants proposed in \cite{indyk2015} that are based on the Walsh-Hadamard transform. Before our work, they were only experimentally verified to produce good quality codes.





Furthermore, a recently proposed technique based on the so-called \textit{Newton Sketch} provides yet another example of application for structured matrices. The method~\cite{pilanci, pilanci2} is used for speeding up algorithms solving convex optimization problems by approximating Hessian matrices using so-called \textit{sketch matrices}. Initially, the sub-Gaussian sketches based on i.i.d. sub-Gaussian random variables
were used. The disadvantage of the sub-Gaussian sketches lies in the fact that computing the sketch of the given matrix 
of size $n \times d$ requires $O(mnd)$ time, where $m \times n$ in the size of the sketch matrix. Thus the method is too slow in practice and could be accelerated with the use of  structured matrices. Some structured approaches were already considered, e.g. sketches based on randomized orthonormal 
systems were proposed~\cite{pilanci}. 

All previously considered methods focus on the randomized setting, whereas the structured matrix instead of being learned is fully random. In the context of adaptive setting, where the parameters are being learned instead, we focus in this paper on multi-layer neural networks. We emphasize though that our approach is much more general and extends beyond this setting. 
Structured neural networks were considered before, for instance in ~\cite{Yang2015}, where the so-called \textit{Deep Fried Neural Convnets} were proposed.
Those architectures are based on the adaptive version of the
Fastfood transform used for approximating various kernels~\cite{le2013fastfood}, which is a special case of structured spinner matrices.

Deep Fried Convnets apply adaptive structured matrices
for fully connected layers of the convolutional networks. The structured matrix is of the form: $\textbf{SHG} \bm{\Pi} \textbf{HB}$, where $\textbf{S}$, $\textbf{G}$, and $\textbf{B}$ are adaptive diagonal matrices, $\bm{\Pi}$ is a random permutation matrix, and $\textbf{H}$ is the Walsh-Hadamard matrix. The method reduces the storage and computational costs of matrix multiplication step from, often prohibitive, $\mathcal{O}(nd)$ down to $\mathcal{O}(n)$ storage and $\mathcal{O}(n\log d)$ computational cost, where $d$ and $n$ denote the size of consequitive layers of the network. At the same time, this approach does not sacrifice the network's predictive performance. Another work~\cite{Moczulski2015} that offers an improvement over Deep Fried Convnets, looks at a structured matrix family that is very similar to $\textbf{HD}_{3}\textbf{HD}_{2}\textbf{HD}_{1}$ (however is significantly less general than the family of structured spinners). Their theoretical results rely on the analysis in~\cite{Huhtanen2015}.

The Adaptive Fastfood approach elegantly complements previous works dedicated to address the problem of huge overparametrization of deep models with structured matrices, e.g. the method of~\cite{NIPS2013_5025} represents the parameter matrix as a product of two low rank factors and, similarly to Adaptive Fastfood, applies both at train and test time,~\cite{DBLP:conf/icassp/SainathKSAR13} introduces low-rank matrix factorization to reduce the size of the fully connected layers at train time, and~\cite{Li13} uses low-rank factorizations with SVD after training the full model. These methods, as well as approaches that consider kernel methods in deep learning~\cite{NIPS2009_3628,Mairal:2014:CKN:2969033.2969120,dai2014scalable,Huang_Kernel_DNN_ICASSP2014}, are conveniently discussed in~\cite{Yang2015}.

Structured neural networks are also considered in \cite{sindhwani_15}, where low-displacement rank matrices are applied for linear projections.
The advantage of this approach over Deep Fried Convnets is due to the high parametrization of the family of low-displacement rank matrices allowing the adjustment of the number of parameters learned based on accuracy and speedup requirements.

The class of structured spinners proposed in this work is more general than Deep Fried Convnets or low displacement rank matrices, but it also provides much easier structured constructions, such as $\textbf{HD}_{3}\textbf{HD}_{2}\textbf{HD}_{1}$ matrices, where $\textbf{D}_{i}$s are adaptive diagonal matrices. Furthermore, to the best of our knowledge we are the first to prove theoretically that structured neural networks learn good quality models, by analyzing the capacity of the family of structured spinners.

\section{The family of \textit{Structured Spinners}}
\label{sec:model}

Before introducing the family of structured spinners, we explain notation.
If not specified otherwise, matrix $\textbf{D}$
is a random diagonal matrix with diagonal entries taken independently at random from $\{-1,+1\}$.
By $\textbf{D}_{t_{1},...,t_{n}}$ we denote the diagonal matrix with diagonal equal to $(t_{1},...,t_{n})$.
For a matrix $\textbf{A}=\{a_{i,j}\}_{i,j=1,...,n} \in \mathbb{R}^{n \times n}$, we denote by $\|\textbf{A}\|_{F}$ its Frobenius norm, i.e. $\|\textbf{A}\|_{F} = \sqrt{\sum_{i,j \in \{1,...,n\}}a_{i,j}^{2}}$, and by $\|\textbf{A}\|_{2}$ its spectral norm, i.e. $\|\textbf{A}\|_{2} = \sup_{\textbf{x} \neq 0} \frac{\|\textbf{Ax}\|_{2}}{\|\textbf{x}\|_{2}}$.
We denote by $\textbf{H}$ the $L_{2}$-normalized Hadamard matrix. We say that $\textbf{r}$ is a random Rademacher vector if every element of $\textbf{r}$ is chosen independently at random from $\{-1,+1\}$.

For a vector $\textbf{r} \in \mathbb{R}^{k}$ and $n>0$ let $\textbf{C}(\textbf{r},n) \in \mathbb{R}^{n \times nk}$ be a matrix,
where the first row is of the form $(\textbf{r}^{T},0,...,0)$ and each subsequent row is obtained from the previous one by right-shifting in a circulant manner the previous one by $k$. For a sequence of matrices $\textbf{W}^{1},...,\textbf{W}^{n} \in \mathbb{R}^{k \times n}$ we denote by $\textbf{V}(\textbf{W}^{1},...,\textbf{W}^{n}) \in \mathbb{R}^{nk \times n}$ a matrix obtained by vertically stacking matrices: $\textbf{W}^{1},...,\textbf{W}^{n}$.


Each structured matrix $\textbf{G}_{struct} \in \mathbb{R}^{n \times n}$ from the family of structured spinners is a product of three main structured components/blocks, i.e.:
\begin{equation}
\textbf{G}_{struct} = \textbf{M}_{3}\textbf{M}_{2}\textbf{M}_{1},
\end{equation}
where matrices $\textbf{M}_{1}, \textbf{M}_{2}$ and $\textbf{M}_{3}$ satisfy conditions:

\begin{figure*}[htp!]
  \center
\includegraphics[width = 2.6in]{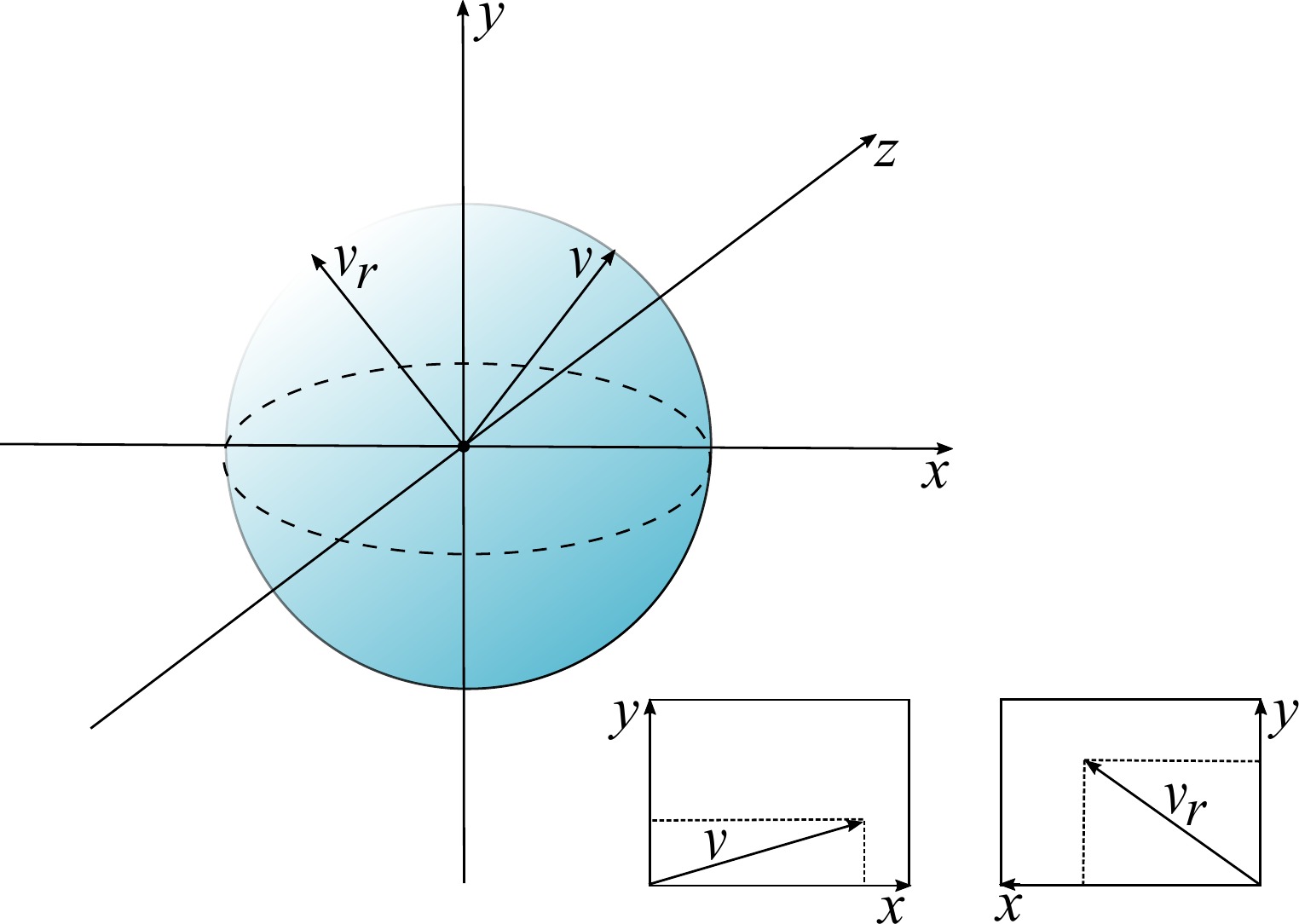}
\hspace{-0.05in}\includegraphics[width = 2.1in]{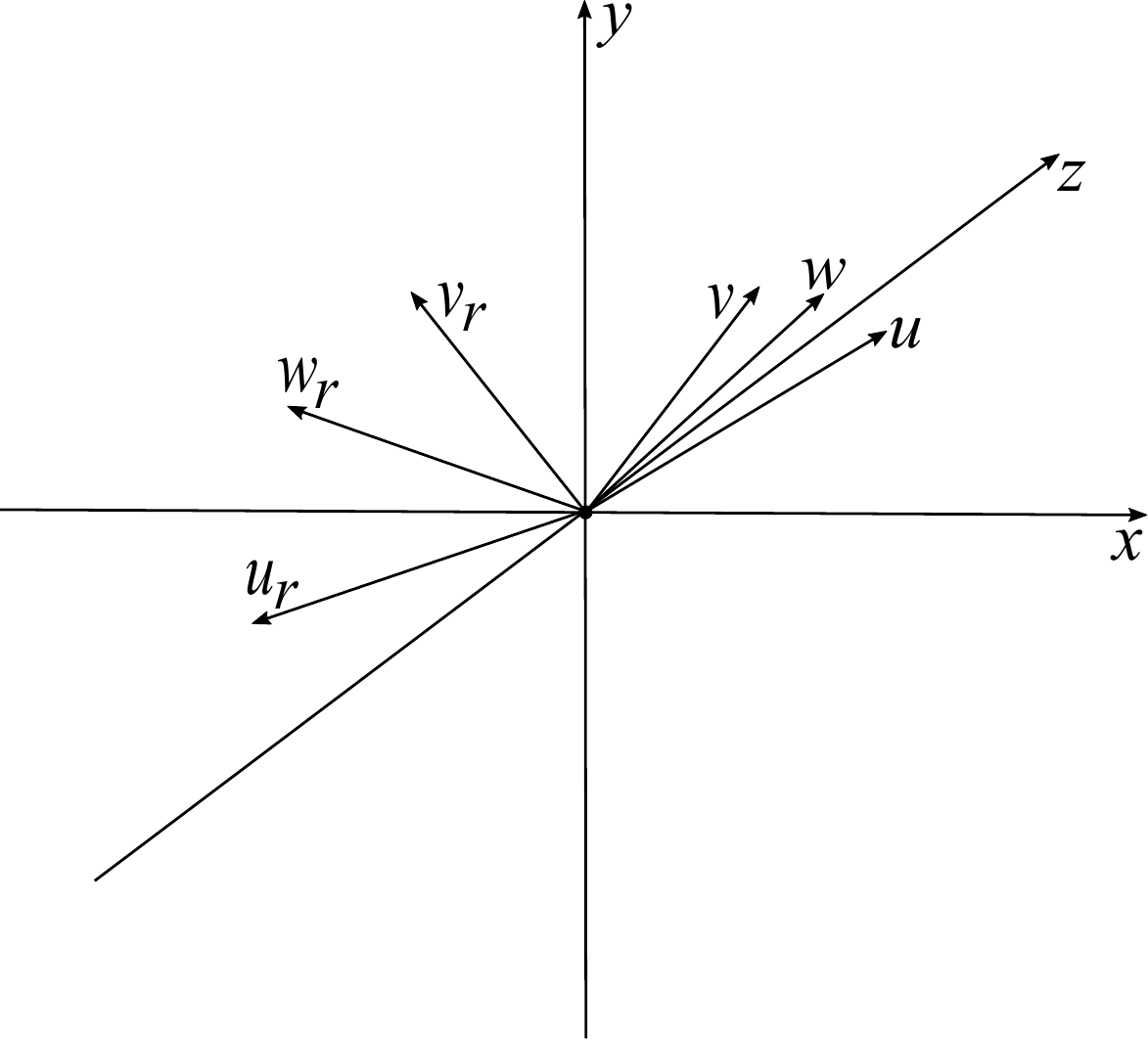} 
\includegraphics[width = 2in]{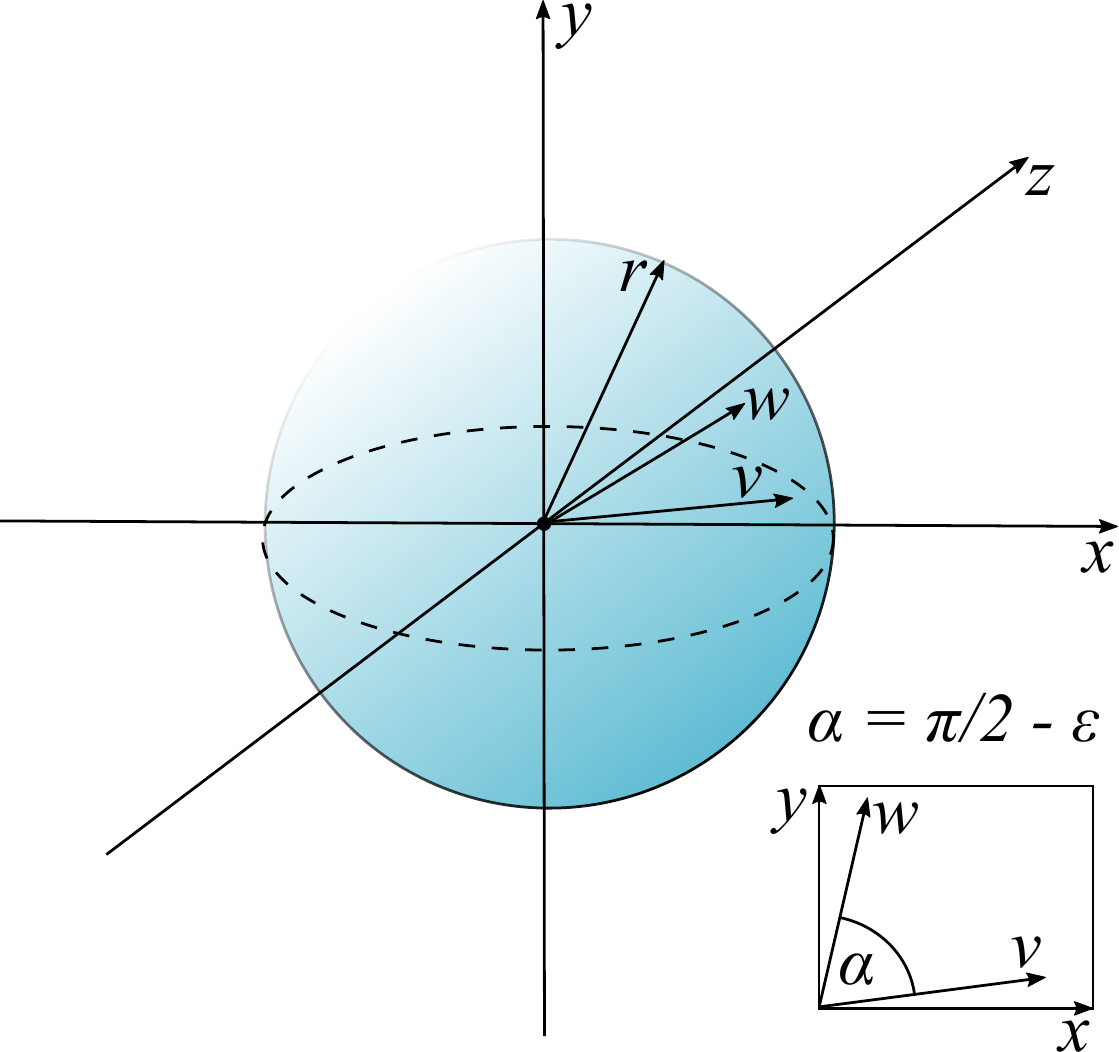} 
\vspace{-0.25in}
\caption{Pictorial explanation of the role of three matrix-blocks in the construction of the structured spinner. Left picture: $\textbf{M}_{1}$ rotates $\textbf{v}$ such that the rotated version $\textbf{v}_{r}$ is balanced. Middle picture: $\textbf{M}_{2}$ transforms vectors $\textbf{v},\textbf{w},\textbf{u}$ such that their images $\textbf{v}_{r},\textbf{w}_{r},\textbf{u}_{r}$ are near-orthogonal. Right picture: The projections of the random vector $\textbf{r}$ onto such two near-orthogonal vectors $\textbf{v}$, \textbf{w} are near-independent.\vspace{-0.15in}}
\label{fig:structspin}
\end{figure*}

\begin{framed}
\textbf{Condition 1:} Matrices: $\textbf{M}_{1}$ and $\textbf{M}_{2}\textbf{M}_{1}$ are $(\delta(n),p(n))$-balanced isometries. \\
\textbf{Condition 2:} $\textbf{M}_{2} = \textbf{V}(\textbf{W}^{1},...,\textbf{W}^{n})\textbf{D}_{\rho_{1},...,\rho_{n}}$ for some $(\Delta_{F},\Delta_{2})$-smooth set: ${\textbf{W}^{1},...,\textbf{W}^{n}} \in \mathbb{R}^{k \times n}$ and some i.i.d sub-Gaussian random variables $\rho_{1},...,\rho_{n}$ with sub-Gaussian norm $K$. \\
\textbf{Condition 3:} $\textbf{M}_{3} = \textbf{C}(\textbf{r},n)$
for $\textbf{r} \in \mathbb{R}^{k}$, where $\textbf{r}$ is random Rademacher/Gaussian in the random setting and is learned in the adaptive setting.
\end{framed}

Matrix $\textbf{G}_{struct}$ is a structured spinner with parameters: $\delta(n),p(n),K,\Lambda_{F},\Lambda_{2}$.
We explain the introduced conditions below. 

\begin{definition}[$(\delta(n),p(n))$-balanced matrices]
A randomized matrix $\textbf{M} \in \mathbb{R}^{n \times m}$ is $(\delta(n),p(n))$-balanced if for every $\textbf{x} \in \mathbb{R}^{m}$ with $\|\textbf{x}\|_{2} = 1$ we have: $\mathbb{P}[\|\textbf{M}\textbf{x}\|_{\infty} > \frac{\delta(n)}{\sqrt{n}}] \leq p(n)$.
\end{definition}

\begin{remark}
\label{balanceness_remark}
One can take as $\textbf{M}_{1}$ a matrix $\textbf{HD}_{1}$
since, as we will show in the Supplement, matrix $\textbf{HD}_{1}$ is $(\log(n),2ne^{-\frac{\log^{2}(n)}{8}})$-balanced. 
\end{remark}

\begin{definition}[$(\Delta_{F},\Delta_{2})$-smooth sets]
A deterministic set of matrices $\textbf{W}^{1},...,\textbf{W}^{n} \in \mathbb{R}^{k \times n}$ is $(\Lambda_{F},\Lambda_{2})$-smooth if:
\begin{itemize}
\vspace{-0.1in}
\item $\|\textbf{W}^{i}_{1}\|_{2} = .. = \|\textbf{W}^{i}_{n}\|_{2}$
      for $i = 1,...,n$, where $\textbf{W}^{i}_{j}$ stands for the  
      $j^{th}$ column of $\textbf{W}^{i}$,
      \vspace{-0.1in}
 \item for $i \neq j$ and $l = 1,...,n$ we have: 
       $(\textbf{W}^{i}_{l})^{T} \cdot \textbf{W}^{j}_{l} = 0$, 
       \vspace{-0.2in}
\item $\max_{i,j} \|(\textbf{W}^{j})^{T}\textbf{W}^{i}\|_{F} \leq \Lambda_{F}$ and \\ $\max_{i,j} \|(\textbf{W}^{j})^{T}\textbf{W}^{i}\|_{2} \leq \Lambda_{2}.$
\end{itemize}
\end{definition}

\begin{remark}
If the unstructured matrix $\textbf{G}$ has rows taken from the general multivariate Gaussian distribution with diagonal covariance matrix $\Sigma \neq \textbf{I}$ then one needs to rescale vectors $\textbf{r}$ accordingly.
For clarity, we assume here that $\Sigma = \textbf{I}$ and we present our theoretical results for that setting.
\end{remark}

All structured matrices previously considered are special cases of a wider family of structured spinners (for clarity, we will explicitly show it for some important special cases). We have:

\begin{lemma}
\label{simple_lemma}
The following matrices: $\textbf{G}_{circ}\textbf{D}_{2}\textbf{HD}_{1}$, $\sqrt{n}\textbf{HD}_{3}\textbf{HD}_{2}\textbf{HD}_{1}$
and $\sqrt{n}\textbf{HD}_{g_{1},...,g_{n}}\textbf{HD}_{2}\textbf{HD}_{1}$, where $\textbf{G}_{circ}$ is Gaussian circulant, are valid structured spinners for $\delta(n) = \log(n)$, $p(n) = 2ne^{-\frac{\log^{2}(n)}{8}}$, $K = 1$, $\Lambda_{F} = O(\sqrt{n})$ and $\Lambda_{2} = O(1)$. The same is true if one replaces $\textbf{G}_{circ}$ by a Gaussian Hankel or Toeplitz matrix.
\end{lemma}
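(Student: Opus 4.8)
The plan is to verify Conditions 1--3 directly for each of the three claimed matrix families, reusing the observation of Remark~\ref{balanceness_remark} wherever an $\textbf{HD}$-type block appears. First I would fix $\textbf{M}_1 = \textbf{HD}_1$ in all three cases; by Remark~\ref{balanceness_remark} this is $(\log(n), 2ne^{-\log^2(n)/8})$-balanced, which settles the first half of Condition~1. For the second half I need $\textbf{M}_2\textbf{M}_1$ to be $(\delta(n),p(n))$-balanced as well: in the $\sqrt n\,\textbf{HD}_3\textbf{HD}_2\textbf{HD}_1$ case one has $\textbf{M}_2\textbf{M}_1 = \sqrt n\,\textbf{HD}_2\textbf{HD}_1$ (up to the leading scaling that makes the spinner $L_2$-preserving), which is again a product of a normalized Hadamard and a random $\pm1$ diagonal, hence balanced by exactly the same Hadamard/Rademacher concentration argument (Hoeffding on each coordinate of $\textbf{HD}_2(\textbf{HD}_1\textbf{x})$, union bound over $n$ coordinates). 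The same reasoning covers $\sqrt n\,\textbf{HD}_{g_1,\dots,g_n}\textbf{HD}_2\textbf{HD}_1$ since there $\textbf{M}_2\textbf{M}_1$ is still $\sqrt n\,\textbf{HD}_2\textbf{HD}_1$; and in the $\textbf{G}_{circ}\textbf{D}_2\textbf{HD}_1$ case $\textbf{M}_2\textbf{M}_1 = \textbf{D}_2\textbf{HD}_1$, which has the same $\ell_\infty$ tail as $\textbf{HD}_1$ because $\textbf{D}_2$ only flips signs.

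Next I would check Condition~2 by exhibiting the factorization $\textbf{M}_2 = \textbf{V}(\textbf{W}^1,\dots,\textbf{W}^n)\textbf{D}_{\rho_1,\dots,\rho_n}$ with a $(\Lambda_F,\Lambda_2)$-smooth set and i.i.d.\ sub-Gaussian $\rho_i$. For $\sqrt n\,\textbf{HD}_3\textbf{HD}_2$: the randomness in $\textbf{D}_2$ provides the $\rho_i$ (Rademacher, sub-Gaussian norm $K=1$), and absorbing $\sqrt n\,\textbf{HD}_3\textbf{H}$ into the deterministic part, I would take $k=1$ so that each $\textbf{W}^i$ is the appropriately scaled $i$-th column of $\sqrt n\,\textbf{HD}_3\textbf{H}$ — its columns all have equal norm (orthogonality of $\textbf{H}$ plus the $\sqrt n$ scaling gives unit-norm columns), the cross-column orthogonality condition holds because distinct columns of $\textbf{H}$ are orthogonal and $\textbf{D}_3$ preserves this, and $\|(\textbf{W}^j)^T\textbf{W}^i\|_F, \|(\textbf{W}^j)^T\textbf{W}^i\|_2$ are both $O(1)$ scalars, well within $\Lambda_F = O(\sqrt n)$, $\Lambda_2 = O(1)$. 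For $\sqrt n\,\textbf{HD}_{g_1,\dots,g_n}\textbf{HD}_2$ the situation is identical with $\textbf{D}_3$ replaced by the Gaussian diagonal — still a deterministic-once-conditioned orthogonality-preserving factor, and the $\rho_i$ again coming from $\textbf{D}_2$. For $\textbf{G}_{circ}\textbf{D}_2$ one peels off $\textbf{D}_2$ as the $\rho$-block and must present the columns of $\textbf{G}_{circ}$ (a fixed realization, or rather its deterministic circulant skeleton) as a smooth set; here $k$ is larger and one uses the circulant structure to bound the Gram matrices, giving $\Lambda_F = O(\sqrt n)$.

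Condition~3 is the easiest: in each case the outermost factor is already a circulant-type block $\textbf{C}(\textbf{r},n)$ — literally so for $\textbf{G}_{circ}$ (with $\textbf{r}$ Gaussian), and for the Hadamard variants the leftmost $\sqrt n\,\textbf{H}$ together with $\textbf{D}_3$ (resp.\ $\textbf{D}_{g_1,\dots,g_n}$) must be rewritten as $\textbf{C}(\textbf{r},n)$ with $\textbf{r}$ Rademacher (resp.\ Gaussian) — this is where I expect the main bookkeeping obstacle, since it requires carefully reshuffling which scalar randomness is declared to be $\textbf{r}$ versus $\textbf{D}_{\rho_1,\dots,\rho_n}$ versus the deterministic $\textbf{W}^i$'s, and checking that the three blocks, so defined, multiply back to the stated matrix. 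The Hankel and Toeplitz addendum follows because a Hankel matrix is a circulant composed with a coordinate reversal (a fixed permutation, absorbable into the deterministic smooth set) and a Toeplitz matrix embeds into a circulant of twice the size, so the same three-block decomposition and the same parameter bounds go through verbatim. The only genuinely quantitative point is the $O(\sqrt n)$ bound on $\Lambda_F$ for the Gaussian-circulant cases, which reduces to a standard estimate on the Frobenius norm of a product of two circulant (or Hadamard-diagonal) matrices.
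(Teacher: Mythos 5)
There is a genuine gap, and it sits exactly where you flagged ``the main bookkeeping obstacle'': your proposed decomposition of the Hadamard-based spinners is structurally wrong, and the one quantitative computation the lemma actually requires is missing. You absorb $\sqrt{n}\,\textbf{H}\textbf{D}_{3}\textbf{H}$ into the deterministic smooth set and take $k=1$. This cannot work: with $k=1$ the vector $\textbf{r}$ is a single scalar, so $\textbf{M}_{3}=\textbf{C}(\textbf{r},n)$ carries one degree of randomness and cannot reproduce the $n$ independent signs of $\textbf{D}_{3}$; moreover Condition~3 requires precisely that the diagonal of $\textbf{D}_{3}$ (resp.\ the Gaussians $g_{1},\dots,g_{n}$) play the role of $\textbf{r}$, not be frozen into a ``deterministic-once-conditioned'' factor --- that randomness is what the CLT argument of Theorem~\ref{main_struct_theorem} projects onto near-orthogonal directions. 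The correct decomposition takes $k=n$, $\textbf{r}=\mathrm{diag}(\textbf{D}_{3})$, $\phi_{i}(\textbf{x})=(\sqrt{n}h_{i,1}x_{1},\dots,\sqrt{n}h_{i,n}x_{n})^{T}$, hence $\textbf{W}^{i}$ with entries $w^{i}_{a,b}=\sqrt{n}\,h_{i,a}h_{a,b}$, and $\rho_{i}=\mathrm{diag}(\textbf{D}_{2})_{i}$.

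With the correct $\textbf{W}^{i}$, your claim that the Gram norms are ``$O(1)$ scalars, well within $\Lambda_{F}=O(\sqrt{n})$'' is false: one has $(\textbf{W}^{j})^{T}\textbf{W}^{i}=n\,\textbf{H}^{T}\textbf{D}_{h_{j,\cdot}}\textbf{D}_{h_{i,\cdot}}\textbf{H}$, which is an orthogonal matrix, so $\|(\textbf{W}^{j})^{T}\textbf{W}^{i}\|_{F}=\sqrt{n}$ exactly and $\|(\textbf{W}^{j})^{T}\textbf{W}^{i}\|_{2}=1$. The bound $\Lambda_{F}=O(\sqrt{n})$ is tight, not slack, and establishing it requires the cross-term cancellation $\sum_{k,l}h_{r_{1},k}h_{r_{1},l}h_{r_{2},k}h_{r_{2},l}=0$ coming from orthogonality of distinct Hadamard rows --- this computation is the substance of the lemma and does not appear in your argument. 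A parallel omission occurs in the circulant case: there $\textbf{r}$ is the first (Gaussian) row of $\textbf{G}_{circ}$, the $\textbf{W}^{i}$ are circulant shifts of the identity, and each nontrivial Gram matrix is a permutation-type isometry with $n$ unit entries, again giving $\Lambda_{F}=\Theta(\sqrt{n})$ and $\Lambda_{2}=O(1)$; ``one uses the circulant structure to bound the Gram matrices'' is the claim to be proved, not a proof. Your handling of Condition~1 and of the Hankel/Toeplitz extension is fine.
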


\subsection{The role of three blocks $\bf M_1$, $\bf M_2$, and $\bf M_3$}

The role of blocks $\textbf{M}_{1}$, $\textbf{M}_{2}$, $\textbf{M}_{3}$ can be intuitively explained. Matrix $\textbf{M}_{1}$ makes vectors ``balanced'', so that there is no dimension that carries too much of the $L_{2}$-norm of the vector. The balanceness property was already applied in the structured setting \cite{ailon2006approximate}.

The role of $\textbf{M}_{2}$ is more subtle and differs between adaptive and random settings.
In the random setting, the cost of applying the structured mechanism is the loss of independence.
For instance, the dot products of the rows of a circulant Gaussian matrix with a given vector $\textbf{x}$ are no longer independent, as it is the case in the fully random setup. Those dot products can be expressed as a dot product of a fixed Gaussian row with different vectors $\textbf{v}$. Matrix $\textbf{M}_{2}$ makes these vectors close to orthogonal. In the adaptive setup, the ``close to orthogonality'' property is replaced by the independence property.

Finally, matrix $\textbf{M}_{3}$ defines the capacity of the entire structured transform by providing a vector of parameters (either random or to be learned). The near-independence of the aforementioned dot products in the random setting is now implied by the near-orthogonality property achieved by $\textbf{M}_{2}$ and the fact that the projections of the Gaussian vector or the random Rademacher vector onto ``almost orthogonal directions'' are ``close to independent''. The role of the three matrices is described pictorially in Figure~\ref{fig:structspin}.

\subsection{Stacking together \textit{Structured Spinners}}
\label{subsec:block}

We described structured spinners as square matrices, but in practice we are not restricted to those, i.e. one can construct an $m \times n$ structured spinner for $m \leq n$ from the square $n \times n$ structured spinner by taking its first $m$ rows. We can then stack vertically these independently constructed $m \times n$ matrices to obtain an $k \times n$ matrix for both: $k \leq n$ and $k > n$. We think about $m$ as another parameter of the model that tunes the ``structuredness'' level, i.e. larger values of $m$ indicate more structured approach while smaller values lead to more random matrices ($m=1$ case is the fully unstructured one). 

\section{Theoretical results}
\label{sec:theory}

We now show that structured spinners can replace their unstructured counterparts in many machine learning algorithms with minimal loss of accuracy.

Let $\mathcal{A}_{\mathcal{G}}$ be a machine learning algorithm applied to a fixed dataset $\mathcal{X} \subseteq \mathbb{R}^{n}$ and parametrized by a set $\mathcal{G}$ of matrices $\textbf{G} \in \mathcal{R}^{m \times n}$, where each $\textbf{G}$ is either learned or Gaussian with independent entries taken from $\mathcal{N}(0,1)$. Assume furthermore, that $\mathcal{A}_{\mathcal{G}}$ consists of functions $f_{1},...,f_{s}$, where each $f_{i}$ applies a certain matrix $\textbf{G}_{i}$ from $\mathcal{G}$ to vectors from some linear space $\mathcal{L}_{i}$ of dimensionality at most $d$. Note that for a fixed dataset $\mathcal{X}$ function $f_{i}$ is a function of a random vector 
$$\textbf{q}_{f_{i}} = ((\textbf{G}_{i}\textbf{x}^{1})^{T},...,(\textbf{G}_{i}\textbf{x}^{d_{i}})^{T})^{T} \in \mathbb{R}^{d_{i}\cdot m},
$$
where $dim(\mathcal{L}_{i}) = d_{i} \leq d$ and $\textbf{x}^{1},...,\textbf{x}^{d_{i}}$ stands for some fixed basis of $\mathcal{L}_{i}$.

Denote by $f_{i}^{\prime}$ the structured counterpart of $f_{i}$, where $\textbf{G}_{i}$ is replaced by the structured spinner (for which vector $\textbf{r}$ is either learned or random). We will show that $f_{i}^{\prime}s$ ``resemble'' $f_{i}s$ distribution-wise.
Surprisingly, we will show it under very weak conditions regarding $f_{i}s$, In particular, they can be nondifferentiable, even non-continuous.

Note that the above setting covers a wide range of machine learning algorithms. In particular:

\begin{remark}
In the kernel approximation setting with random feature maps one can match each pair of vectors $\textbf{x},\textbf{y} \in \mathcal{X}$ to a different $f=f_{\textbf{x},\textbf{y}}$.
Each $f$ computes the approximate value of the kernel for vectors $\textbf{x}$ and $\textbf{y}$.
Thus in that scenario $s = {|\mathcal{X}| \choose 2}$ and $d=2$ (since one can take: $\mathcal{L}_{f(\textbf{x},\textbf{y})} = span(\textbf{x},\textbf{y})$).
\end{remark}

\begin{remark}
In the vector quantization algorithms using random projection trees one can take $s=\nobreak 1$ (the algorithm $\mathcal{A}$ itself is a function $f$ outputting the partitioning of space into cells) and $d = d_{intrinsic}$, where $d_{intrinsic}$ is an intrinsic dimensionality of a given dataset $\mathcal{X}$ (random projection trees are often used if $d_{intrinsic} \ll n$).
\end{remark}

\subsection{Random setting}

We need the following definition.

\begin{definition}
A set $\mathcal{S}$ is $b$-convex if it is a union of at most $b$ pairwise disjoint convex sets.
\end{definition}

Fix a funcion $f_{i}: \mathbb{R}^{d_{i} \cdot m} \rightarrow \mathcal{V}$, for some domain $\mathcal{V}$.
Our main result states that for any $\mathcal{S} \subseteq \mathcal{V}$ such that $f_{i}^{-1}(\mathcal{S})$  is measurable and $b$-convex for $b$ not too large, the probability that $f_{i}(\textbf{q}_{f_{i}})$ belongs to $\mathcal{S}$ is close to the probability that $f^{\prime}_{i}(\textbf{q}_{f^{\prime}_{i}})$ belongs to $\mathcal{S}$.

\begin{theorem}[structured random setting]
\label{main_struct_theorem}
Let $\mathcal{A}$ be a randomized algorithm using unstructured Gaussian matrices $\textbf{G}$ and let $s,d$ and $f_{i}$s be as at the beginning of the section.
Replace the unstructured matrix $\textbf{G}$ by one of 
structured spinners defined in Section \ref{sec:model} with blocks of $m$ rows each. 
Then for $n$ large enough, $\epsilon = o_{md}(1)$ and fixed $f_{i}$ 
with probability $p_{succ}$ at least:
\vspace{-0.1in}
\begin{multline}
\label{imp_p}
\hspace{-0.1in}1 - 2p(n)d 
- 2{md \choose 2}e^{-\Omega(\min(\frac{\epsilon^{2}n^{2}}{K^{4}\Lambda_{F}^{2}\delta^{4}(n)}, \frac{\epsilon n}{K^{2}\Lambda_{2} \delta^{2}(n)}))}
\end{multline}
with respect to the random choices of $\textbf{M}_{1}$ and $\textbf{M}_{2}$
the following holds for any $\mathcal{S}$ such that $f^{-1}_{i}(\mathcal{S})$ is measurable and $b$-convex:
$$|\mathbb{P}[f_{i}(\textbf{q}_{f_{i}}) \in \mathcal{S}]-\mathbb{P}[f^{\prime}_{i}(\textbf{q}_{f^{\prime}_{i}}) \in \mathcal{S}]| \leq b \eta,$$
where the the probabilities in the last formula are with respect to the random choice of $\textbf{M}_{3}$, $\eta = \frac{\delta^{3}(n)}{n^{\frac{2}{5}}}$, and $\delta(n), p(n),K, \Lambda_{F}, \Lambda_{2}$ are as in the definition of  structured spinners from Section \ref{sec:model}.
\end{theorem}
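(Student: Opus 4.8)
The plan is to track how the random vector $\textbf{q}_{f_i}$ transforms through the three blocks and to reduce everything to a single Berry--Esseen-type statement for projections of a Rademacher/Gaussian vector onto near-orthogonal directions. Fix one $f_i$ and its basis $\textbf{x}^1,\dots,\textbf{x}^{d_i}$ of $\mathcal{L}_i$, which we may take orthonormal. In the unstructured case $\textbf{q}_{f_i}$ is a standard Gaussian vector in $\mathbb{R}^{d_i m}$ (independent rows, projections onto orthonormal directions are independent). In the structured case, write $\textbf{G}_{struct}\textbf{x} = \textbf{C}(\textbf{r},n)\,\textbf{M}_2\textbf{M}_1\textbf{x}$, so the $j$-th coordinate of $\textbf{G}_{struct}\textbf{x}$ is $\textbf{r}^T\phi_j(\textbf{M}_2\textbf{M}_1\textbf{x})$ for the canonical linear isometries $\phi_j$ coming from the circulant/stacked structure of Condition~3. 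The first step is therefore to condition on $\textbf{M}_1,\textbf{M}_2$ and view $\textbf{q}_{f^\prime_i}$ as a \emph{linear image of the fixed random vector $\textbf{r}$}: each of its $d_i m$ coordinates is $\textbf{r}^T\textbf{u}$ for a vector $\textbf{u}=\textbf{u}(\textbf{x}^a,j)$ determined by $\textbf{M}_1,\textbf{M}_2$ and the block index.

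The second step is to show that, with probability at least the quantity in~\eqref{imp_p} over $\textbf{M}_1,\textbf{M}_2$, the Gram matrix of these $d_i m$ vectors $\textbf{u}$ is $\epsilon$-close to the identity. The $\|\textbf{u}\|_2=1$ part follows because $\textbf{M}_1$ and $\textbf{M}_2\textbf{M}_1$ are isometries and the columns of each $\textbf{W}^i$ have equal norms (the smoothness conditions), using the $(\delta(n),p(n))$-balancedness to control the contribution of $\textbf{D}_{\rho_1,\dots,\rho_n}$ via the sub-Gaussian concentration of $\sum_\ell \rho_\ell^2 (\textbf{M}_1\textbf{x})_\ell^2$ (here the balanceness bound $\|\textbf{M}_1\textbf{x}\|_\infty\le\delta(n)/\sqrt n$ makes each term $O(\delta^2(n)/n)$, which is where the $2p(n)d$ term and the $\delta^4(n)$ in the exponent enter). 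For the off-diagonal entries $\textbf{u}^T\textbf{u}^\prime$: if the two vectors come from the same basis element $\textbf{x}^a$ but different blocks/shifts, the cross terms are built from $(\textbf{W}^i_l)^T\textbf{W}^j_l=0$ plus a quadratic form in the $\rho_\ell$'s whose coefficient matrix has Frobenius norm $\le\Lambda_F$ and spectral norm $\le\Lambda_2$ — Hanson--Wright / Bernstein for sub-Gaussian quadratic forms gives a tail of exactly the form $e^{-\Omega(\min(\epsilon^2 n^2/(K^4\Lambda_F^2\delta^4(n)),\,\epsilon n/(K^2\Lambda_2\delta^2(n))))}$; if they come from different basis elements, near-orthogonality of $\textbf{x}^a,\textbf{x}^b$ is preserved because $\textbf{M}_1$ is an isometry and then the same quadratic-form argument applies. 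A union bound over the $\binom{d_i m}{2}\le\binom{md}{2}$ pairs and the $d_i\le d$ balanceness events yields~\eqref{imp_p}.

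The third step works on this good event and compares the two distributions. We have $\textbf{q}_{f_i}\sim\mathcal{N}(0,\textbf{I})$ on $\mathbb{R}^{d_im}$ and $\textbf{q}_{f^\prime_i}=\textbf{U}^T\textbf{r}$ where the Gram matrix $\textbf{U}^T\textbf{U}$ is within $\epsilon$ of $\textbf{I}$ entrywise. Introduce the intermediate Gaussian $\textbf{q}_{f_i}(\epsilon)\sim\mathcal{N}(0,\textbf{U}^T\textbf{U})$ with the same covariance as $\textbf{q}_{f^\prime_i}$. For the first gap, $|\mathbb{P}[f_i(\textbf{q}_{f_i})\in\mathcal S]-\mathbb{P}[f_i(\textbf{q}_{f_i}(\epsilon))\in\mathcal S]|$, note $f_i^{-1}(\mathcal S)$ is a union of at most $b$ convex sets, and two Gaussians with covariances $\epsilon$-close in the above sense assign probabilities to any \emph{single} convex set that differ by $O(\eta)$ with $\eta=\delta^3(n)/n^{2/5}$ (a Pinsker/Gaussian-comparison estimate; this forces $\epsilon=o_{md}(1)$ so that the $md$-dimensional comparison constant is absorbed), hence by $b\eta$ after summing over the $b$ pieces. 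For the second gap, $|\mathbb{P}[f_i(\textbf{q}_{f_i}(\epsilon))\in\mathcal S]-\mathbb{P}[f^\prime_i(\textbf{q}_{f^\prime_i})\in\mathcal S]|$, both vectors are the image of the \emph{same} linear map applied to $\mathcal{N}(0,\textbf{I})$ versus $\textbf{r}$ (Rademacher, or trivially equal in the Gaussian-$\textbf{r}$ case), so on each of the $\le b$ convex pieces of $f_i^{-1}(\mathcal S)$ — whose preimage under the linear map $\textbf{U}^T$ is again convex — we invoke the multivariate Berry--Esseen CLT for sums of independent bounded random vectors (the rows of $\textbf{U}^T$ scaled by the $r_\ell$'s), which bounds the discrepancy over convex sets by something of order $\mathrm{poly}(\delta(n))\cdot n^{-c}$; calibrating the exponents gives $\eta=\delta^3(n)/n^{2/5}$, and summing over the $b$ pieces gives $b\eta$. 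Combining the two gaps via the triangle inequality yields the claimed bound.

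The main obstacle is the third step, specifically getting the multivariate Berry--Esseen rate to come out as $n^{-2/5}$ (rather than the naive $n^{-1/2}$ or worse, degraded by the dimension $d_im$ and by the $\delta(n)/\sqrt n$-sized coordinates of the summands) uniformly over all $b$-convex sets — this is exactly where the "relatively new Berry--Esseen type CLT for random vectors" advertised in the introduction is needed, and the balanceness parameter $\delta(n)$ must be fed carefully into its third-moment term. The Gaussian-to-Gaussian comparison in the same step is the easy half; the Hanson--Wright bookkeeping in step two is routine but is where all the parameters $K,\Lambda_F,\Lambda_2,\delta(n)$ get tied together into~\eqref{imp_p}.
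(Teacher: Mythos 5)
Your proposal follows essentially the same route as the paper's proof: condition on $\textbf{M}_{1},\textbf{M}_{2}$, use balancedness plus the Hanson--Wright inequality (with the $\delta^{2}(n)/n$ rescaling of the quadratic-form matrices) to get the near-orthogonality event with probability \eqref{imp_p}, then apply a multivariate Berry--Esseen bound over convex sets (the paper uses Bentkus's theorem, whose third-moment term $\beta=O(n\cdot (k/n)^{3/2}\delta^{3}(n))$ with $k=md$ fixed yields the $\delta^{3}(n)n^{-2/5}$ rate for $n$ large) and sum over the $b$ convex pieces of $f_{i}^{-1}(\mathcal{S})$. The intermediate Gaussian with perturbed covariance that you introduce explicitly is handled implicitly in the paper via the ``$\epsilon=o_{md}(1)$, $n$ large enough'' clause, so the two arguments coincide in substance.
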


\begin{remark}
The theorem does not require any strong regularity conditions regarding $f_{i}s$ (such as differentiability or even continuity). In practice, $b$ is often a small constant. For instance, for the angular kernel approximation where $f_{i}$s are non-continuous and for $\mathcal{S}$-singletons, we can take $b=1$ (see Supplement).
\end{remark}

Now let us think of $f_{i}$ and $f_{i}^{\prime}$
as random variables, where randomness is generated by vectors $\textbf{q}_{f_{i}}$ and $\textbf{q}_{f^{\prime}_{i}}$ respectively.
Then, from Theorem \ref{main_struct_theorem}, we get:

\begin{theorem}
\label{convex_theorem}
Denote by $F_{X}$ the cdf of the random variable $X$ and by $\phi_{X}$
its characteristic function.
If $f_{i}$ is convex or concave in respect to $\textbf{q}_{f_{i}}$,
then for every $t$ the following holds: $|F_{f_{i}}(t) - F_{f^{\prime}_{i}}(t)| = O(\frac{\delta^{3}(n)}{n^{\frac{2}{5}}})$.
Furthermore, if $f_{i}$ is bounded then:
$|\phi_{f_{i}}(t) - \phi_{f^{\prime}_{i}}(t)| = O(\frac{\delta^{3}(n)}{n^{\frac{2}{5}}})$. 
\end{theorem}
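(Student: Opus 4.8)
The plan is to derive Theorem~\ref{convex_theorem} directly from Theorem~\ref{main_struct_theorem} by specializing the sets $\mathcal{S}$. First I would recall that a convex (or concave) function $f_i$ of its argument has the property that every sublevel set $\{q : f_i(q) \le t\}$ (resp. superlevel set) is convex. Consequently, taking $\mathcal{S} = (-\infty, t]$, the preimage $f_i^{-1}(\mathcal{S})$ is convex, hence $1$-convex, so Theorem~\ref{main_struct_theorem} applies with $b = 1$. This immediately gives $|\mathbb{P}[f_i(\textbf{q}_{f_i}) \le t] - \mathbb{P}[f^{\prime}_i(\textbf{q}_{f^{\prime}_i}) \le t]| \le \eta = \tfrac{\delta^3(n)}{n^{2/5}}$, which is exactly the statement $|F_{f_i}(t) - F_{f^{\prime}_i}(t)| = O(\tfrac{\delta^3(n)}{n^{2/5}})$ (the concave case is symmetric, swapping sublevel for superlevel sets and passing to complements, which preserves convexity of the relevant region). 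One should also note that the failure probability $p_{\mathrm{succ}}$ in~\eqref{imp_p} tends to $1$, so the bound holds with high probability over the choice of $\textbf{M}_1, \textbf{M}_2$; this is what lets us state the conclusion cleanly.

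For the characteristic function statement, I would write $\phi_{f_i}(t) = \mathbb{E}[\cos(t f_i)] + i\,\mathbb{E}[\sin(t f_i)]$ and bound each piece via the cdf bound. The standard tool is the layer-cake / integration-by-parts identity $\mathbb{E}[g(X)] = \int g\,dF_X$, together with the fact that $|F_{f_i} - F_{f^{\prime}_i}| \le \eta$ uniformly; since $f_i$ is bounded, say $|f_i| \le R$, the integral runs over the compact interval $[-R, R]$ and $\int_{-R}^{R} |dF_{f_i}(x) - dF_{f^{\prime}_i}(x)|$ against a Lipschitz integrand like $\cos(tx)$ or $\sin(tx)$ is controlled, after integration by parts, by $(\text{boundary terms}) + |t|\!\int_{-R}^{R} |F_{f_i}(x) - F_{f^{\prime}_i}(x)|\,dx \le 2\eta + 2R|t|\eta = O(\eta)$ for fixed $t$. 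Hence $|\phi_{f_i}(t) - \phi_{f^{\prime}_i}(t)| = O(\tfrac{\delta^3(n)}{n^{2/5}})$ as claimed. Boundedness is genuinely needed here: without it the integration-by-parts boundary terms and the $\int |F - F'|$ term need not be finite, so a uniform cdf gap alone would not bound the characteristic function — this is presumably why the hypothesis is stated only for the second conclusion.

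The main obstacle, such as it is, is not analytic depth but bookkeeping: one must be careful that Theorem~\ref{main_struct_theorem} is being invoked with the right value of $b$ and that the ``measurable and $b$-convex'' hypothesis is genuinely met. For a sublevel set of a convex function this is clean, but one should double-check the concave case and the edge cases where the sublevel set is empty or all of $\mathbb{R}^{d_i m}$ (trivially handled). A secondary subtlety is that the conclusion of Theorem~\ref{main_struct_theorem} is itself only ``with probability $p_{\mathrm{succ}}$'' over $\textbf{M}_1, \textbf{M}_2$; so strictly the cdf and characteristic-function bounds should be stated as holding on that same high-probability event, or one absorbs the $o(1)$ failure probability into the $O(\cdot)$ by a union/total-probability argument. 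I would state it the latter way to keep the theorem statement as written. Everything else — the Lipschitz bounds on $\cos, \sin$, the finiteness of integrals under boundedness — is routine.
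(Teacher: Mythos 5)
Your proof of the cdf bound is the paper's proof verbatim in substance: sublevel sets of a convex $f_i$ are convex, so $f_i^{-1}((-\infty,t])$ is $1$-convex and Theorem~\ref{main_struct_theorem} applies with $b=1$; the concave case is handled symmetrically. Your explicit remarks about the quantifier over $\textbf{M}_1,\textbf{M}_2$ and the high-probability event are points the paper glosses over, and they are correct. For the characteristic-function bound you take a mildly different route: you control $\mathbb{E}[\cos(tf_i)]$ and $\mathbb{E}[\sin(tf_i)]$ by Riemann--Stieltjes integration by parts against $F_{f_i}-F_{f_i'}$, picking up a boundary term of order $\eta$ and an integral term of order $R|t|\eta$. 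The paper instead uses the layer-cake identity $\phi_X(t)=\int_{-1}^{1}\mathbb{P}[\cos(tX)>s]\,ds+i\int_{-1}^{1}\mathbb{P}[\sin(tX)>s]\,ds$ and decomposes each event $\{\cos(tX)>s\}$ into a union of finitely many intervals (finiteness coming from boundedness of $f_i$), each of whose probabilities is a difference of two cdf values. The two arguments are dual formulations of the same idea and rest on the same two ingredients (the uniform cdf gap plus boundedness, which both of you correctly identify as the reason the hypothesis appears only in the second claim); yours is arguably cleaner since it avoids counting intervals, while the paper's stays entirely at the level of probabilities of convex sets. Both yield a constant depending on $t$ and the bound $R$, consistent with the theorem as stated. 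No gaps.
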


Theorem \ref{main_struct_theorem} implies strong accuracy guarantees for the specific structured spinners. As a corollary we get:

\begin{theorem}
\label{corollary_theorem}
Under assumptions from Theorem~\ref{main_struct_theorem}
the probability $p_{succ}$ from Theorem~\ref{main_struct_theorem} reduces to:
$1 - 4ne^{-\frac{\log^{2}(n)}{8}}d - 2{md \choose 2} e^{-\Omega(\frac{\epsilon^{2}n}{\log^{4}(n)})}$
for the structured matrices $\sqrt{n}\textbf{HD}_{3}\textbf{HD}_{2}\textbf{HD}_{1}$, $\sqrt{n}\textbf{HD}_{g_{1},...,g_{n}}\textbf{HD}_{2}\textbf{HD}_{1}$ as well as for the structured matrices of the form $\textbf{G}_{struct}\textbf{D}_{2}\textbf{HD}_{1}$, where $\textbf{G}_{struct}$ is Gaussian circulant, Gaussian Toeplitz or Gaussian Hankel matrix.
\end{theorem}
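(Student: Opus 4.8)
The statement is a corollary of Theorem~\ref{main_struct_theorem}: only the success probability $p_{succ}$ is being specialized, while the conclusion (the $b\eta$ bound on the discrepancy $|\mathbb{P}[f_{i}(\textbf{q}_{f_{i}})\in\mathcal{S}]-\mathbb{P}[f'_{i}(\textbf{q}_{f'_{i}})\in\mathcal{S}]|$) is unchanged. So the plan is: (i) invoke Lemma~\ref{simple_lemma} together with Remark~\ref{balanceness_remark} to read off the concrete parameters of each listed family; (ii) substitute them into the generic success probability~\eqref{imp_p}; and (iii) collapse the $\min$ appearing in the exponent. By Lemma~\ref{simple_lemma}, each of $\sqrt{n}\textbf{HD}_{3}\textbf{HD}_{2}\textbf{HD}_{1}$, $\sqrt{n}\textbf{HD}_{g_{1},\dots,g_{n}}\textbf{HD}_{2}\textbf{HD}_{1}$ and $\textbf{G}_{struct}\textbf{D}_{2}\textbf{HD}_{1}$ (with $\textbf{G}_{struct}$ Gaussian circulant, Toeplitz or Hankel) is a genuine structured spinner, i.e.\ satisfies Conditions~1--3, with $\delta(n)=\log n$, $p(n)=2ne^{-\log^{2}(n)/8}$, $K=1$, $\Lambda_{F}=O(\sqrt{n})$ and $\Lambda_{2}=O(1)$; hence Theorem~\ref{main_struct_theorem} applies verbatim to each of them and it remains only to evaluate~\eqref{imp_p} at these parameter values.

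Carrying out the substitution, the first deduction term $2p(n)d$ becomes $2\cdot 2ne^{-\log^{2}(n)/8}\cdot d=4ne^{-\log^{2}(n)/8}d$, which is exactly the first term of the claimed bound, and since $K=1$ every $K$-factor disappears. Using $\Lambda_{F}^{2}=O(n)$, $\Lambda_{2}=O(1)$ and $\delta(n)=\log n$, the two arguments of the $\min$ inside the exponent are
$$\frac{\epsilon^{2}n^{2}}{K^{4}\Lambda_{F}^{2}\delta^{4}(n)}=\Omega\!\left(\frac{\epsilon^{2}n}{\log^{4}n}\right),\qquad \frac{\epsilon n}{K^{2}\Lambda_{2}\delta^{2}(n)}=\Omega\!\left(\frac{\epsilon n}{\log^{2}n}\right).$$

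It remains to determine which branch is binding. The ratio of the two quantities above is $\epsilon/\log^{2}n$, which tends to $0$ because $\epsilon=o_{md}(1)$; thus for $n$ large enough the $\min$ equals $\Omega(\epsilon^{2}n/\log^{4}n)$, so the last deduction term of~\eqref{imp_p} becomes $2{md\choose 2}e^{-\Omega(\epsilon^{2}n/\log^{4}n)}$, giving precisely the stated $p_{succ}$. The only (very mild) obstacle is this last comparison — confirming that the $\epsilon^{2}n/\log^{4}n$ branch dominates; everything else is direct substitution, and the genuinely substantive work is already contained in Lemma~\ref{simple_lemma} and Theorem~\ref{main_struct_theorem}.
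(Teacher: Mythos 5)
Your proposal is correct and follows exactly the route the paper takes: the paper's proof is a one-line appeal to Theorem~\ref{main_struct_theorem} and Lemma~\ref{simple_lemma}, and you have simply written out the substitution of $\delta(n)=\log n$, $p(n)=2ne^{-\log^{2}(n)/8}$, $K=1$, $\Lambda_{F}=O(\sqrt{n})$, $\Lambda_{2}=O(1)$ into the generic bound, including the (correct) observation that the $\epsilon^{2}n/\log^{4}n$ branch of the $\min$ dominates since $\epsilon=o_{md}(1)$.
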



As a corollary of Theorem~\ref{corollary_theorem}, we obtain the following result showing the effectiveness of the cross-polytope LSH with structured matrices $\textbf{HD}_{3}\textbf{HD}_{2}\textbf{HD}_{1}$ that was only heuristically confirmed before~\cite{indyk2015}.

\begin{theorem}
\label{hopefully_last_theorem}
Let $\textbf{x},\textbf{y} \in \mathbb{R}^{n}$ be two unit $L_{2}$-norm vectors. 
Let $\textbf{v}_{\textbf{x}, \textbf{y}}$ be the vector indexed by all $(2m)^{2}$ ordered pairs of canonical directions $(\pm \textbf{e}_{i},\pm \textbf{e}_{j})$, where the value of the entry indexed by $(\textbf{u},\textbf{w})$ is the probability that: $h(\textbf{x}) = \textbf{u}$ and $h(\textbf{y}) = \textbf{w}$, and $h(\textbf{v})$ stands for the hash of $\textbf{v}$. Then with probability at least: 
$p_{success} =  1 - 8ne^{-\frac{\log^{2}(n)}{8}} - 2{2m \choose 2} e^{-\Omega(\frac{\epsilon^{2}n}{\log^{4}(n)})}$
the version of the stochastic vector $\textbf{v}^{1}_{\textbf{x},\textbf{y}}$
for the unstructured Gaussian matrix $\textbf{G}$ and its structured counterpart $\textbf{v}^{2}_{\textbf{x},\textbf{y}}$ for the matrix $\textbf{HD}_{3}\textbf{HD}_{2}\textbf{HD}_{1}$ satisfy: 
$\|\textbf{v}^{1}_{\textbf{x},\textbf{y}} - \textbf{v}^{2}_{\textbf{x},\textbf{y}}\|_{\infty} \leq \log^{3}(n)n^{-\frac{2}{5}} + c \epsilon$,
for $n$ large enough, where $c > 0$ is a universal constant.
The probability above is taken with respect to random choices of $\textbf{D}_{1}$ and $\textbf{D}_{2}$.
\end{theorem}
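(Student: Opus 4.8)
The plan is to derive Theorem~\ref{hopefully_last_theorem} as a direct specialization of Theorem~\ref{corollary_theorem} to the cross-polytope LSH family, following the template already used for earlier corollaries. First I would set up the reduction: for a fixed pair of unit vectors $\textbf{x},\textbf{y}$, the cross-polytope hash assigns $h(\textbf{v})$ to be the (signed) canonical direction $\pm\textbf{e}_i$ closest to the normalized projection $\textbf{G}\textbf{v}/\|\textbf{G}\textbf{v}\|_2$ (or $\textbf{HD}_3\textbf{HD}_2\textbf{HD}_1\textbf{v}$ in the structured case). The key observation is that for each fixed ordered pair $(\textbf{u},\textbf{w})$ of signed canonical directions, the event $\{h(\textbf{x})=\textbf{u},\ h(\textbf{y})=\textbf{w}\}$ is a function $f_i$ of the random vector $\textbf{q}_{f_i}=((\textbf{G}\textbf{x})^T,(\textbf{G}\textbf{y})^T)^T$, since both hashes depend only on $\textbf{G}\textbf{x}$ and $\textbf{G}\textbf{y}$; here $d=2$ (take $\mathcal{L}_f=\mathrm{span}(\textbf{x},\textbf{y})$), so $md=2m$, matching the binomial coefficient $\binom{2m}{2}$ in the claimed success probability. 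Each entry of $\textbf{v}_{\textbf{x},\textbf{y}}$ is then exactly $\mathbb{P}[f_i(\textbf{q}_{f_i})\in\mathcal{S}]$ for the singleton set $\mathcal{S}=\{(\textbf{u},\textbf{w})\}$, and likewise $\textbf{v}^2_{\textbf{x},\textbf{y}}$ for $f_i'$.

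Next I would check that the preimage $f_i^{-1}(\mathcal{S})$ is measurable and $b$-convex with a small constant $b$ — ideally $b=1$ or at worst $b=2$. The event that a fixed coordinate $j$ of $\textbf{G}\textbf{x}/\|\textbf{G}\textbf{x}\|_2$ dominates in absolute value, with a prescribed sign, is a cone intersected with a halfspace, hence convex; intersecting the analogous cone for $\textbf{y}$ keeps it convex (intersection of convex sets), so in $\mathbb{R}^{2m}$ the relevant region is convex, giving $b=1$ (as the Remark after Theorem~\ref{main_struct_theorem} already asserts for the angular case with singletons — the cross-polytope case is structurally the same). With $b=O(1)$, Theorem~\ref{main_struct_theorem} with the parameters from Lemma~\ref{simple_lemma} ($\delta(n)=\log n$, $p(n)=2ne^{-\log^2(n)/8}$, $K=1$, $\Lambda_F=O(\sqrt n)$, $\Lambda_2=O(1)$) gives, for each fixed ordered pair, a bound $|\textbf{v}^1_{\textbf{x},\textbf{y}}[(\textbf{u},\textbf{w})]-\textbf{v}^2_{\textbf{x},\textbf{y}}[(\textbf{u},\textbf{w})]|\le b\eta=O(\log^3(n)\,n^{-2/5})$, on a success event for the choice of $\textbf{D}_1,\textbf{D}_2$ of the probability stated in Theorem~\ref{corollary_theorem}. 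The extra additive $c\epsilon$ term comes from the $\epsilon$-perturbation of the covariance structure of $\textbf{q}_{f_i}$ that is built into Theorem~\ref{main_struct_theorem} (the structured $\textbf{q}_{f'_i}$ is $\epsilon$-close, not exactly equal, in covariance to the Gaussian $\textbf{q}_{f_i}$): a $c\epsilon$-shift in covariance entries translates, via the same near-Gaussian comparison, into at most a $c\epsilon$-shift in the probability of any (convex) event.

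Finally I would upgrade from "for each fixed ordered pair" to "simultaneously over all $(2m)^2$ pairs", yielding the $\ell_\infty$ bound on $\textbf{v}^1_{\textbf{x},\textbf{y}}-\textbf{v}^2_{\textbf{x},\textbf{y}}$. The point is that the success event in Theorem~\ref{main_struct_theorem}/\ref{corollary_theorem} concerns only the random choice of $\textbf{M}_1,\textbf{M}_2$ (i.e.\ $\textbf{D}_1,\textbf{D}_2$) and, once it holds, the per-pair conclusion is valid for every $b$-convex $\mathcal{S}$ — in particular for all $(2m)^2$ singletons at once, with no union bound over pairs needed on top of it. Hence the overall failure probability is just the single expression from Theorem~\ref{corollary_theorem} with $d=2$, i.e.\ $8ne^{-\log^2(n)/8}+2\binom{2m}{2}e^{-\Omega(\epsilon^2 n/\log^4 n)}$ (the factor $8=4\cdot 2$ absorbing the $d=2$), which is exactly $p_{success}$. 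The main obstacle I anticipate is the careful verification that $f_i^{-1}(\mathcal{S})$ is genuinely $b$-convex with a controlled $b$: one must confirm that restricting to the top-magnitude coordinate with a fixed sign carves out a convex region even after the joint constraint on $\textbf{x}$ and $\textbf{y}$, and handle ties / the normalization $\textbf{G}\textbf{v}\mapsto\textbf{G}\textbf{v}/\|\textbf{G}\textbf{v}\|_2$ (which is scale-invariant, so the cone description is unaffected) — together with tracking how the $\epsilon$-covariance slack propagates to the clean $c\epsilon$ term. Everything else is bookkeeping: plugging the Lemma~\ref{simple_lemma} parameters into the exponent of \eqref{imp_p} and simplifying.
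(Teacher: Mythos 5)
Your proposal is correct and follows essentially the same route as the paper's proof: restrict to the $2$-dimensional span of $\textbf{x},\textbf{y}$ (so $d=2$, $md=2m$), observe that each cross-polytope hash cell pulls back to a convex cone in $\mathbb{R}^{2m}$ (so $b=1$ and all $(2m)^2$ singletons are handled simultaneously on the same success event over $\textbf{D}_1,\textbf{D}_2$), apply Theorem~\ref{corollary_theorem} to get the $\log^3(n)n^{-2/5}$ term, and attribute the extra $c\epsilon$ to comparing the exact Gaussian with its $\epsilon$-perturbed covariance version. The only step you leave as a claim — that the covariance perturbation shifts the probability of a convex set by $O(\epsilon)$ — is exactly what the paper fills in by writing out the two Gaussian densities and expanding $(\textbf{I}+\textbf{E})^{-1}$ and $\det(\textbf{I}+\textbf{E})$.
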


For angles in the range $[0, \frac{\pi}{3}]$
the result above leads to the same asymptotics of the probabilities of collisions as these in Theorem 1 of \cite{indyk2015} given for the unstructured cross-polytope LSH.

The proof for the discrete structured setting applies Berry-Esseen-type results for random vectors (details are in the Supplement) showing that for $n$ large enough $\pm 1$ random vectors $\textbf{r}$ act similarly to Gaussian vectors.

\subsection{Adaptive setting}

The following theorem explains that structured spinners can be used to replace unstructured fully connected neural network layers performing dimensionality reduction (such as hidden layers in certain autoencoders) provided that input data has low intrinsic dimensionality. These theoretical findings were confirmed in experiments that will be presented in the next section.
We will use notation from Theorem \ref{main_struct_theorem}.

\begin{theorem}
\label{neural_theorem}
Consider a matrix $\textbf{M} \in \mathbb{R}^{m \times n}$ encoding the weights of connections between a layer $l_{0}$ of size $n$ and a layer $l_{1}$ of size $m$ in some learned unstructured neural network model. Assume that the input to layer $l_{0}$ is taken from the $d$-dimensional space $\mathcal{L}$ (although potentially embedded in a much higher dimensional space).
Then with probability at least 
\vspace{-0.1in}
\begin{multline}
\label{imp_p}
\hspace{-0.1in}1 - 2p(n)d 
- 2{md \choose 2}e^{-\Omega(\min(\frac{t^{2}n^{2}}{K^{4}\Lambda_{F}^{2}\delta^{4}(n)}, \frac{t n}{K^{2}\Lambda_{2} \delta^{2}(n)}))}
\end{multline}
for $t = \frac{1}{md}$ and
with respect to random choices of $\textbf{M}_{1}$ and $\textbf{M}_{2}$, there exists a vector $\textbf{r}$ defining $\textbf{M}_{3}$ (see: definition of the structured spinner)
such that the structured spinner $\textbf{M}^{struct} = \textbf{M}_{3}\textbf{M}_{2}\textbf{M}_{1}$ 
equals to $\textbf{M}$ on $\mathcal{L}$.
\end{theorem}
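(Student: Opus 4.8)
The plan is to reduce the statement to a counting/dimension argument about the space of vectors $\textbf{r}$ that parametrize $\textbf{M}_{3} = \textbf{C}(\textbf{r},n)$. Fix a basis $\textbf{x}^{1},\dots,\textbf{x}^{d}$ of the $d$-dimensional input space $\mathcal{L}$. Since $\textbf{M}^{struct} = \textbf{M}_{3}\textbf{M}_{2}\textbf{M}_{1}$ and we want $\textbf{M}^{struct}$ to agree with the given learned matrix $\textbf{M}$ on all of $\mathcal{L}$, it suffices to match them on the basis, i.e. to require $\textbf{M}_{3}(\textbf{M}_{2}\textbf{M}_{1}\textbf{x}^{j}) = \textbf{M}\textbf{x}^{j}$ for $j = 1,\dots,d$. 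First I would observe that, exactly as in the random-setting analysis behind Theorem~\ref{main_struct_theorem}, the event described by the probability bound \eqref{imp_p} (with $\epsilon$ replaced by $t = \frac{1}{md}$) is precisely the event that the $md$ vectors obtained by stacking the $m$-row blocks applied to the images $\textbf{M}_{2}\textbf{M}_{1}\textbf{x}^{j}$ are pairwise near-orthogonal and near-unit-norm after the balancing and smoothing steps — in other words, the Gram matrix of the relevant projected directions is within $t$ of the identity in the appropriate entrywise sense. This is the step that the balanced-isometry property of $\textbf{M}_{1}$, $\textbf{M}_{2}\textbf{M}_{1}$ (Condition~1) and the $(\Lambda_F,\Lambda_2)$-smoothness of the $\textbf{W}^i$ together with sub-Gaussian concentration of the $\rho_i$ (Condition~2) are designed to deliver, and it is imported verbatim from the proof of Theorem~\ref{main_struct_theorem}.

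Next I would turn the "agree on $\mathcal{L}$" requirement into a linear system in the unknown $\textbf{r} \in \mathbb{R}^k$. Each coordinate of $\textbf{M}_{3}\textbf{z}$ is of the form $\textbf{r}^T \phi_j(\textbf{z})$ for the appropriate shift/embedding $\phi_j$, so the $md$ scalar equations (one per output coordinate, per basis vector; here $m$ is the block size and there are $d$ basis vectors, though after vertical stacking one has the full target height) are linear in $\textbf{r}$. Collect them as $A\textbf{r} = \textbf{b}$, where the rows of $A$ are the vectors $\phi_j(\textbf{M}_2\textbf{M}_1\textbf{x}^i)$ and $\textbf{b}$ is the corresponding stack of entries of $\textbf{M}\textbf{x}^i$. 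The key point is that on the high-probability event above, the rows of $A$ are linearly independent: their Gram matrix is within $t = \frac{1}{md}$ of $\textbf{I}$ in the relevant norm, hence is strictly positive definite (its smallest eigenvalue is at least $1 - (md-1)t \cdot(\text{something}) > 0$ once $t$ is chosen as $\frac{1}{md}$), so $A$ has full row rank. A full-row-rank linear system is always solvable, so a vector $\textbf{r}$ realizing $\textbf{M}^{struct}|_{\mathcal{L}} = \textbf{M}|_{\mathcal{L}}$ exists; this $\textbf{r}$ is exactly the parameter vector that would be found in the adaptive (learned) setting.

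I expect the main obstacle to be bookkeeping rather than a genuine difficulty: one must verify that the number of linear constraints, namely the effective dimension $md$ of the stacked-image system, does not exceed $k$ (the dimension of $\textbf{r}$), and that the near-identity Gram bound from the random-setting analysis is strong enough to force full rank with the explicit slack $t = \frac{1}{md}$. This requires tracking how the block-stacking construction of Section~\ref{subsec:block} inflates $k$ and confirming that $k \geq md$ in the regime of interest (which is the relevant "dimensionality reduction / low intrinsic dimension" regime where $md \ll n \le k$). The only subtlety is making sure that the conditioning of $A$ — controlled by $\Lambda_F$, $\Lambda_2$, $K$, $\delta(n)$ through exactly the exponent appearing in \eqref{imp_p} — degrades gracefully, i.e. that choosing the concentration radius equal to $\frac{1}{md}$ (rather than the vanishing $\epsilon$ used in Theorem~\ref{main_struct_theorem}) still leaves the success probability of the stated form; this is immediate by substituting $\epsilon \gets t = \frac{1}{md}$ into the earlier tail bound. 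Everything else is the observation that "full rank $\Rightarrow$ consistent linear system $\Rightarrow$ existence of $\textbf{r}$."
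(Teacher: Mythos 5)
Your proposal matches the paper's proof in all essentials: both reduce the claim to solvability of the linear system $A\textbf{r}=\textbf{b}$ over a basis of $\mathcal{L}$, invoke the same Hanson--Wright/balancedness event to show the Gram matrix of the rows has unit diagonal and off-diagonal entries of magnitude at most $t=\frac{1}{md}$, and conclude full row rank and hence existence of $\textbf{r}$. The only cosmetic difference is in the last step: you certify positive definiteness via a Gershgorin-type eigenvalue bound $1-(md-1)t>0$, whereas the paper bounds $\det(\textbf{G}\textbf{G}^{T})$ from below by $(1-t(md-1))(1+t)^{md-1}$ using a determinant comparison theorem; both are valid and equivalent for this purpose.
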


\section{Experiments}
\label{sec:experiments}

In this section we consider a wide range of different applications of structured spinners: locality-sensitive hashing, kernel approximations, and finally neural networks. Experiments with Newton sketches are deferred to the Supplement. Experiments were conducted using Python. In particular, NumPy is linked against a highly optimized BLAS library (Intel MKL). Fast Fourier Transform is performed using numpy.fft and Fast Hadamard Transform is using ffht from~\cite{indyk2015}. To have a fair comparison, we have set up: $\mathrm{OMP\_NUM\_THREADS}=1$ so that every experiment is done on a single thread. Every parameter of the structured spinner matrix is computed in advance, such that obtained speedups take only matrix-vector products into account. 
All figures should be read in color.

\subsection{Locality-Sensitive Hashing (LSH)} 

\begin{figure}[t!]
\vspace{-0.15in}
\centering
\begin{subfigure}[b]{\linewidth}
\centering
\includegraphics[width=\columnwidth]{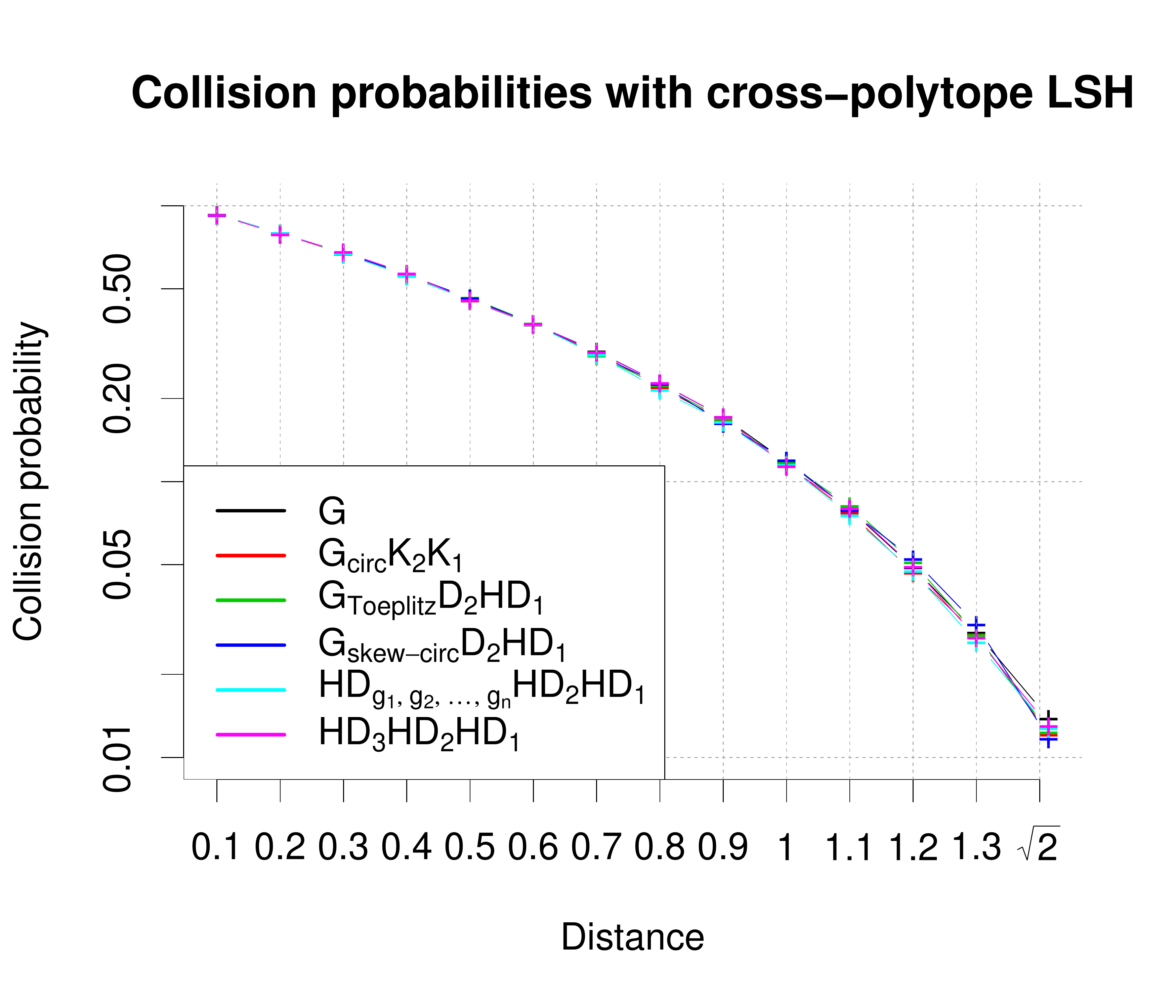}
\label{fig:collision_a}
\end{subfigure}%
\\
\vspace{-0.7in}
\begin{subfigure}[b]{\linewidth}
\centering
\includegraphics[width=\columnwidth]{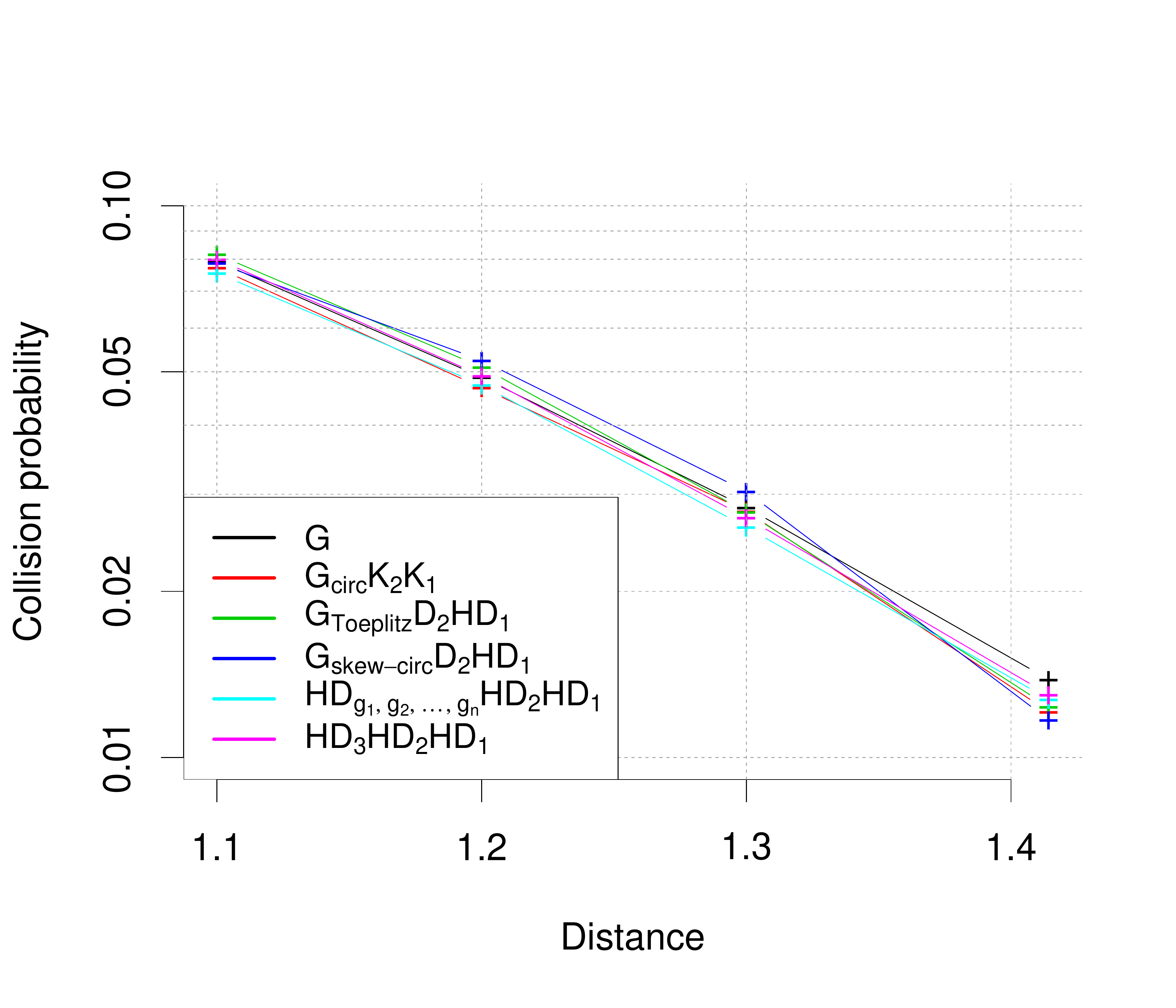}
\label{fig:collision_b}
\end{subfigure}%
\vspace{-0.4in}
\caption{Cross-polytope LSH - collision  probabilities. (bottom) A zoom on higher distances enables to distinguish the curves which are almost superposed. \vspace{-0.2in}} \label{fig:collision}
\end{figure} 

In the first experiment, we consider cross-polytope LSH. In Figure \ref{fig:collision}, we compare collision probabilities for the low dimensional case ($n = 256$), where for each interval, collision probability has been computed for $20 000$ points. Results are shown for one hash function (averaged over $100$ runs). We report results for a random $256 \times 64$ Gaussian matrix $\textbf{G}$ and five other types of matrices from a family of structured spinners (descending order of number of parameters): $\textbf{G}_{circ}\textbf{K}_{2}\textbf{K}_{1}$, $\textbf{G}_{Toeplitz}\textbf{D}_{2}\textbf{HD}_{1}$, $\textbf{G}_{skew-circ}\textbf{D}_{2}\textbf{HD}_{1}$, $\textbf{HD}_{g_{1},...,g_{n}}\textbf{HD}_{2}\textbf{HD}_{1}$, and $\textbf{HD}_{3}\textbf{HD}_{2}\textbf{HD}_{1}$, where $\textbf{K}_i$, $\textbf{G}_{Toeplitz}$, and $\textbf{G}_{skew-circ}$ are respectively a Kronecker matrix with discrete entries, Gaussian Toeplitz and Gaussian skew-circulant matrices. 

All matrices from the family of structured spinners show high collision probabilities for small distances and low ones for large distances. As theoretically predicted, structured spinners do not lead to accuracy losses.
All considered matrices give almost identical results.

\subsection{Kernel approximation}

\begin{figure*}[htp!]
\centering
\includegraphics[width=0.49\columnwidth]{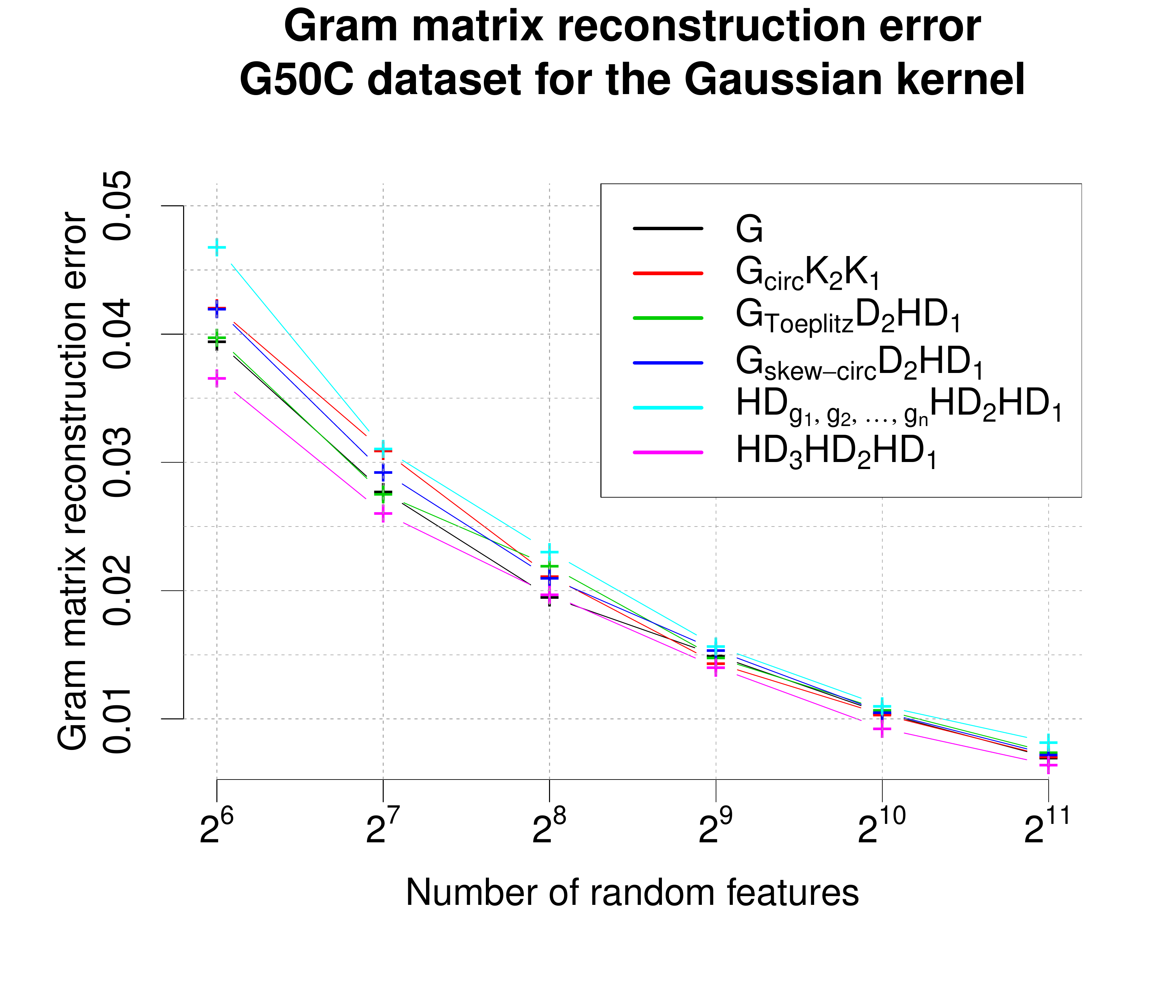} 
\includegraphics[width=0.49\columnwidth]{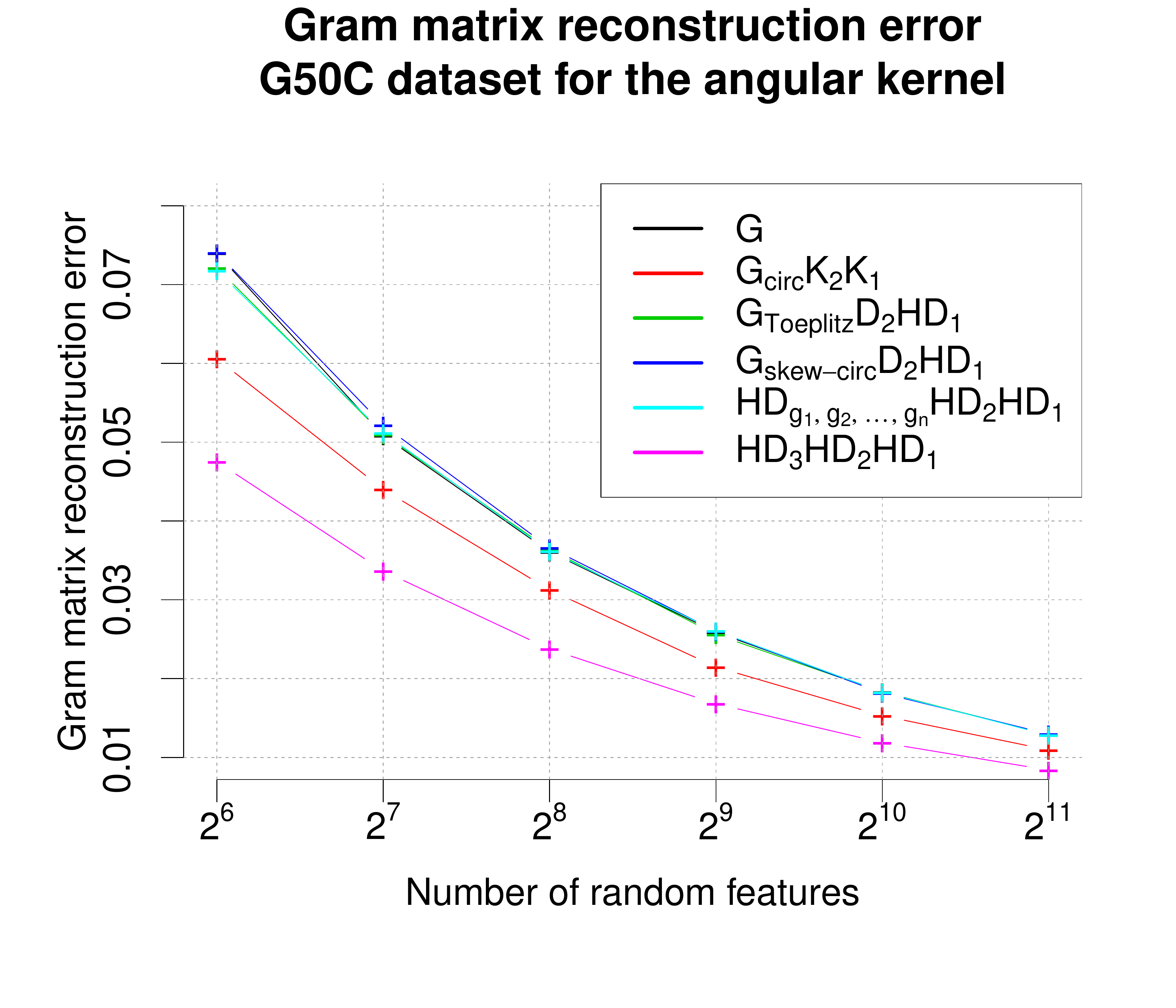}
\vspace{-0.4in}
\caption{Accuracy of random feature map kernel approximation for the G50C dataset.\vspace{-0.05in}}
\label{kernel_approx}
\end{figure*}
\begin{table*}[t!]
\vspace{-0.07in}
\begin{center}
\begin{tabular}{|c||c|c|c|c|c|c|c|}
\hline
{\bf MATRIX DIM.}  & {\bf $2^{9}$} & {\bf $2^{10}$} & {\bf $2^{11}$} & {\bf $2^{12}$} & {\bf $2^{13}$} & {\bf $2^{14}$} & {\bf $2^{15}$} \\
\hline 
\hline
$\textbf{G}_{Toeplitz}\textbf{D}_{2}\textbf{HD}_{1}$  & x1.4      & x3.4       & x6.4       & x12.9   & x28.0       & x42.3       & x89.6       \\
\hline
$\textbf{G}_{skew-circ}\textbf{D}_{2}\textbf{HD}_{1}$ & x1.5      & x3.6       & x6.8       & x14.9    & x31.2       & x49.7       & x96.5       \\
\hline
$\textbf{HD}_{g_{1},...,g_{n}}\textbf{HD}_{2}\textbf{HD}_{1}$ & x2.3      & x6.0       & x13.8       & x31.5     & x75.7       & x137.0      & x308.8       \\
\hline
$\textbf{HD}_{3}\textbf{HD}_{2}\textbf{HD}_{1}$ & x2.2      & x6.0       & x14.1       & x33.3     & x74.3       & x140.4       & x316.8           \\
\hline
\end{tabular}
\end{center}
\vspace{-0.18in}
\caption{Speedups for Gaussian kernel approximation via structured spinners. \vspace{-0.1in}} 
\label{tab:speedupskernel}
\end{table*}

\begin{table*}[t!]
\vspace{-0.05in}
\begin{center}
\begin{tabular}{|c||c|c|c|c|c|c|c|c|c|}
\hline
{\bf h}  & {\bf $2^{4}$} & {\bf $2^{5}$} & {\bf $2^{6}$} & {\bf $2^{7}$} & {\bf $2^{8}$} & {\bf $2^{9}$} & {\bf $2^{10}$} & {\bf $2^{11}$} & {\bf $2^{12}$}\\
\hline
\hline
\textbf{unstructured} & $42.9$ & $51.9$ & $72.7$ & $99.9$ & $163.9$ & $350.5$ & $716.7$ & $1271.5$ & $2317.4$           \\
\hline
$\textbf{HD}_{3}\textbf{HD}_{2}\textbf{HD}_{1}$ & $109.2$ & $121.3$ & $109.7$ & $114.2$ & $117.4$ & $123.9$ & $130.6$ & $214.3$ & $389.8$           \\ 
\hline
\end{tabular}
\end{center}
\vspace{-0.2in}
\caption{Running time (in $[\mu s]$) for the MLP - unstructured matrices vs structured spinners.\vspace{-0.2in}} 
\label{tab:nnruntime}
\end{table*}
\begin{figure*}[t!]
\vspace{-0.12in}
\centering
\includegraphics[width=0.49\columnwidth]{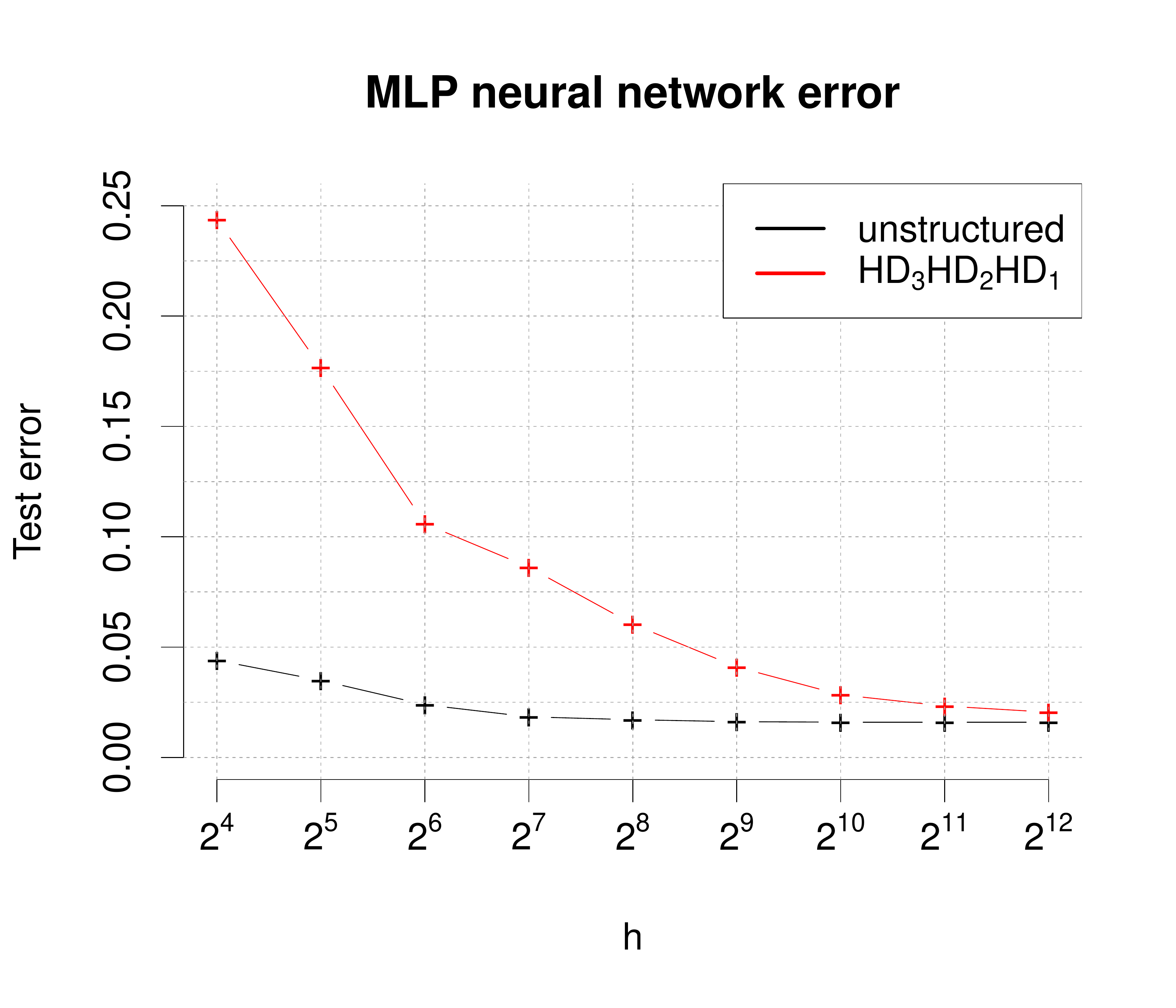}
\vspace{-0.3in}
\includegraphics[width=0.49\columnwidth]{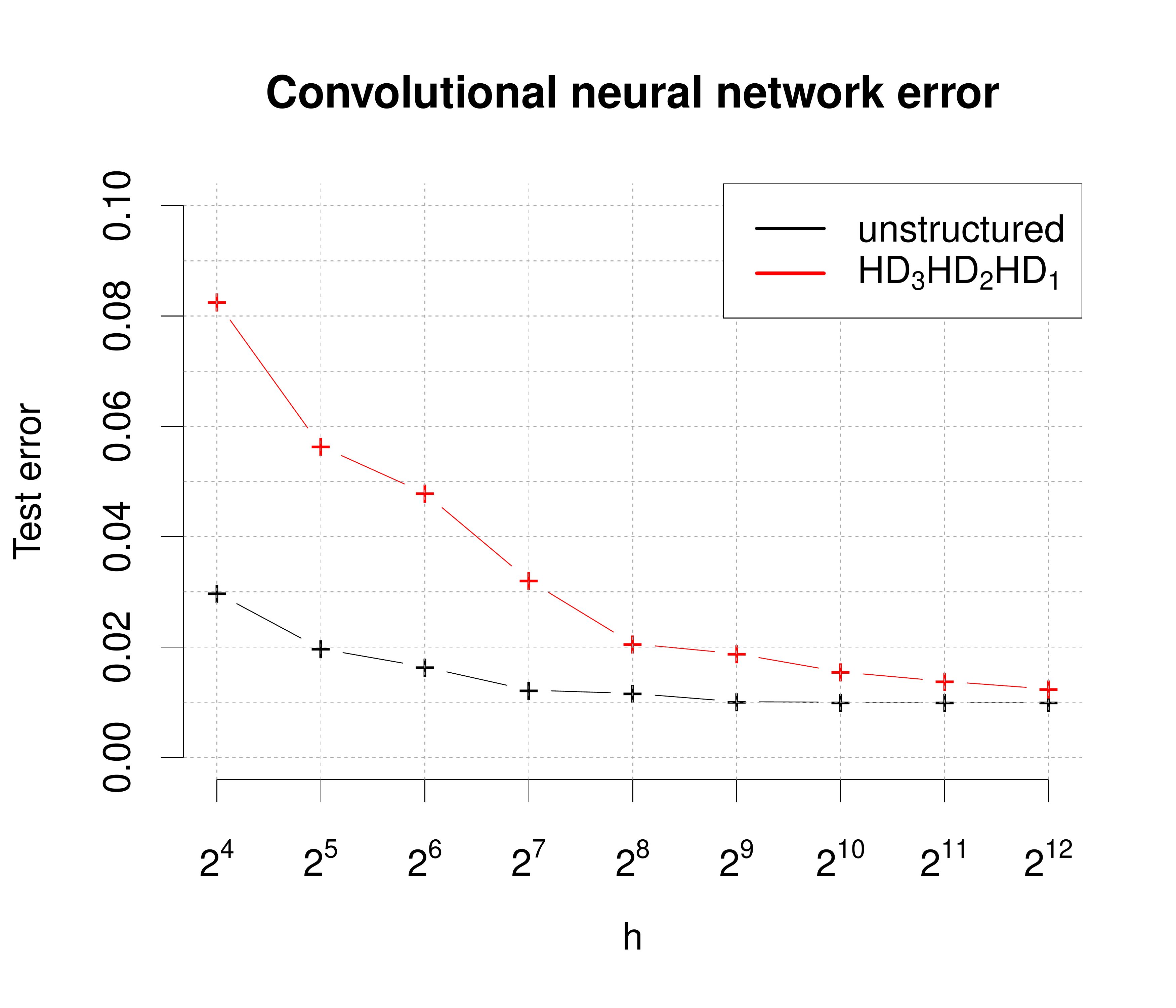}
\vspace{+0.3in}
\caption{Test error for MLP (top) and convolutional network (bottom).\vspace{-0.2in}}
\label{fig:nn}
\vspace{-0.3in}
\end{figure*}

In the second experiment, we approximate the Gaussian and angular kernels using Random Fourier features. The Gaussian random matrix (with i.i.d. Gaussian entries) can be used to sample random Fourier features with a specified $\sigma$. This Gaussian random matrix is replaced with specific matrices from a family of structured spinners for Gaussian and angular kernels. The obtained feature maps are compared. To test the quality of the structured kernels' approximations, we compute Gram-matrix reconstruction error as in \cite{chor_sind_2016} : $ \frac{||\textbf{K} - \tilde{\textbf{K}} ||_{F}}{||\textbf{K}||_{F}} $, where $\textbf{K}, \tilde{\textbf{K}}$ are respectively the exact and approximate Gram-matrices, as a function of the number of random features. When number of random features $k$ is greater than data dimensionality $n$, we apply
block-mechanism described in \ref{subsec:block}.

For the Gaussian kernel,  $\textbf{K}_{ij} = e^{ \frac{ -||\textbf{x}_{i} - \textbf{x}_{j}||_{2}^{2}}{2 \sigma ^{2}}}$ and for the angular kernel,  $\textbf{K}_{ij}= 1 - \frac{\theta}{\pi} $ with $\theta = cos^{-1}(\frac{\textbf{x}_{i}^T \textbf{x}_{j}}{||\textbf{x}_{i}||_{2} ||\textbf{x}_{j}||_{2}})$. For the approximation, $\tilde{\textbf{K}}_{i,j} = \frac{1}{ \sqrt{d'}}s(\textbf{A} \textbf{x}_{i})^{T} \frac{1}{ \sqrt{d'}} s(\textbf{A} \textbf{x}_{j})$ where $s(x) = e^{ \frac{-i x}{\sigma}}$ and $\tilde{\textbf{K}}_{i,j} = 1 - \frac{ \operatorname{d_H}(  s(\textbf{A} \textbf{x}_{i}), s(\textbf{A} \textbf{x}_{j}))}{d'} $ where $s(x) = \operatorname{sign}(x)$ respectively. In both cases, function $s$ is applied pointwise. $d_H$ stands for the Hamming distance and $x_i$, $x_j$ are points from the dataset.

We used two datasets: G50C ($550$ points, $n = 50$) and USPST (test set, $2007$ points, $n = 256$). The results for the USPST dataset are given in the Supplement. For Gaussian kernel, bandwidth $\sigma$ is set to $17.4734$ for G50C and to $9.4338$ for USPST. The choice of $\sigma$ comes from \cite{chor_sind_2016} in order to have comparable results. 
The results are averaged over $10$ runs and the following matrices have been tested: Gaussian random matrix $\textbf{G}$,  $\textbf{G}_{circ}\textbf{K}_{2}\textbf{K}_{1}$, $\textbf{G}_{Toeplitz}\textbf{D}_{2}\textbf{HD}_{1}$, $\textbf{G}_{skew-circ}\textbf{D}_{2}\textbf{HD}_{1}$, $\textbf{HD}_{g_{1},...,g_{n}}\textbf{HD}_{2}\textbf{HD}_{1}$ and $\textbf{HD}_{3}\textbf{HD}_{2}\textbf{HD}_{1}$.

Figure \ref{kernel_approx} shows results for the G50C dataset. In case of G50C dataset, for both kernels, all matrices from the family of structured spinners perform similarly to a random Gaussian matrix. 
$\textbf{HD}_{3}\textbf{HD}_{2}\textbf{HD}_{1}$ performs better than all other matrices for a wide range of sizes of random feature maps. In case of USPST dataset (see: Supplement), for both kernels, all matrices from the family of structured spinners again perform similarly to a random Gaussian matrix (except $\textbf{G}_{circ}\textbf{K}_{2}\textbf{K}_{1}$ which gives relatively poor results) and $\textbf{HD}_{3}\textbf{HD}_{2}\textbf{HD}_{1}$ is giving the best results. Finally, the efficiency of structured spinners does not depend on the dataset.

Table \ref{tab:speedupskernel} shows substantial speedups
obtained by the structured spinner matrices. The speedups are computed as $\mbox{time}(\textbf{G})/\mbox{time}(\textbf{T})$, where $\mbox{time}(\textbf{G})$ and $\mbox{time}(\textbf{T})$ are the runtimes for respectively a random Gaussian matrix and a structured spinner matrix.

\subsection{Neural networks} 

Finally, we performed experiments with neural networks using two different network architectures. The first one is a fully-connected network with two fully connected layers (we call it MLP), where we refer to the size of the hidden layer as $h$, and the second one is a convolutional network with following architecture:
\begin{itemize}
\vspace{-0.15in}
\item Convolution layer with filter size $5 \times 5$, 4 feature maps + ReLU + Max Pooling (region $2 \times 2$ and step $2 \times 2$)
\vspace{-0.1in}
\item Convolution layer with filter size $5 \times 5$, 6 feature maps + ReLU + Max Pooling (region $2 \times 2$ and step $2 \times 2$)
\vspace{-0.08in}
\item Fully-connected layer ($h$ outputs) + ReLU
\vspace{-0.08in}
\item Fully-connected layer ($10$ outputs)
\vspace{-0.08in}
\item LogSoftMax.
\vspace{-0.15in}
\end{itemize}

Experiments were performed on the MNIST data set. In both experiments, we re-parametrized each matrix of weights of fully connected layers with a structured $\textbf{HD}_{3}\textbf{HD}_{2}\textbf{HD}_{1}$ matrix from a family of structured spinners. We compare this setting with the case where the unstructured parameter matrix is used. Note that in case when we use $\textbf{HD}_{3}\textbf{HD}_{2}\textbf{HD}_{1}$ only linear number of parameters is learned (the Hadamard matrix is deterministic and even does not need to be explicitly stored, instead Walsh-Hadamard transform is used).
Thus the network has significantly less parameters than in the unstructured case, e.g. for the MLP network we have $\mathcal{O}(h)$ instead of $\mathcal{O}(\text{input size} \times h)$ parameters.

In Figure~\ref{fig:nn} and Table~\ref{tab:nnruntime} we compare respectively the test error and running time of the unstructured and structured approaches. Figure~\ref{fig:nn} shows that for large enough $h$, neural networks with structured spinners achieve similar performance to those with unstructured projections, while at the same time using structured spinners lead to significant computational savings as shown in Table~\ref{tab:nnruntime}.
As mentioned before, the $\textbf{HD}_{3}\textbf{HD}_{2}\textbf{HD}_{1}$-neural network is a simpler construction than the Deep Friend Convnet, however one can replace it with any structured spinner to obtain compressed neural network architecture of a good capacity.

\newpage
\bibliographystyle{apalike}
\bibliography{StructuredSpinners}

\newpage
\clearpage

\toptitlebar 
{\Large \bf  \centering{Structured adaptive and random spinners for fast machine learning computations\\(Supplementary Material)} \par}
\bottomtitlebar

In the Supplementary material we prove all theorems presented in the main body of the paper.

\subsection{Structured machine learning algorithms with \textit{Structured Spinners}}

We prove now Lemma \ref{simple_lemma}, Remark \ref{balanceness_remark},
as well as Theorem \ref{main_struct_theorem} and Theorem \ref{corollary_theorem}.

\subsubsection{Proof of Remark \ref{balanceness_remark}}

This result first appeared in \cite{ailon2006approximate}. The following proof was given in \cite{chor_sind_2016}, we repeat it here for completeness.
We will use the following standard concentration result.

\begin{lemma}(Azuma's Inequality)
Let $X_{1},...,X_{n}$ be a martingale and assume that $-\alpha_{i} \leq X_{i} \leq \beta_{i}$ for some positive constants $\alpha_{1},...,\alpha_{n}, \beta_{1},...,\beta_{n}$. 
Denote $X = \sum_{i=1}^{n} X_{i}$.
Then the following is true:
\begin{equation}
\mathbb{P}[|X - \mathbb{E}[X]| > a] \leq 2e^{-\frac{a^{2}}{2\sum_{i=1}^{n}(\alpha_{i} + \beta_{i})^{2}}}
\end{equation}
\end{lemma}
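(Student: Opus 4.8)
The plan is to prove this by the exponential-moment (Chernoff) method combined with iterated conditioning, the classical route to Azuma's inequality. I would first fix the reading of the hypothesis ``$X_{1},\dots,X_{n}$ is a martingale'' in the only sense under which $X=\sum_{i}X_{i}$ and the increment bounds are meaningful: $(X_{i})$ is a martingale difference sequence for the filtration $\mathcal{F}_{i}=\sigma(X_{1},\dots,X_{i})$, so $\mathbb{E}[X_{i}\mid\mathcal{F}_{i-1}]=0$ and hence $\mathbb{E}[X]=0$ (equivalently, the partial sums form a martingale). It then suffices to establish the one-sided estimate $\mathbb{P}[X>a]\le e^{-a^{2}/(2S)}$ with $S:=\sum_{i=1}^{n}(\alpha_{i}+\beta_{i})^{2}$, because applying that estimate to the sequence $(-X_{i})$, which satisfies $-\beta_{i}\le -X_{i}\le\alpha_{i}$ and therefore has the same range lengths $\alpha_{i}+\beta_{i}$, and then taking a union bound yields the factor $2$.

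First I would apply Markov's inequality to $e^{\lambda X}$: for every $\lambda>0$, $\mathbb{P}[X>a]\le e^{-\lambda a}\,\mathbb{E}[e^{\lambda X}]$. The core step is to bound $\mathbb{E}[e^{\lambda X}]$ by peeling off one increment at a time. Writing $e^{\lambda X}=e^{\lambda(X_{1}+\dots+X_{n-1})}\,e^{\lambda X_{n}}$ and observing that $e^{\lambda(X_{1}+\dots+X_{n-1})}$ is $\mathcal{F}_{n-1}$-measurable, the tower property gives $\mathbb{E}[e^{\lambda X}]=\mathbb{E}\bigl[e^{\lambda(X_{1}+\dots+X_{n-1})}\,\mathbb{E}[e^{\lambda X_{n}}\mid\mathcal{F}_{n-1}]\bigr]$, so everything reduces to controlling the conditional moment generating function of a single increment.

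For that I would use the conditional form of Hoeffding's lemma: if $Y$ is a random variable (here $X_{i}$, conditionally on $\mathcal{F}_{i-1}$) with $\mathbb{E}[Y]=0$ and $c\le Y\le c'$ almost surely, then $\mathbb{E}[e^{\lambda Y}]\le e^{\lambda^{2}(c'-c)^{2}/8}$. The proof is by convexity of $t\mapsto e^{\lambda t}$: one has $e^{\lambda Y}\le\frac{c'-Y}{c'-c}e^{\lambda c}+\frac{Y-c}{c'-c}e^{\lambda c'}$ pointwise, so taking expectations and using $\mathbb{E}[Y]=0$ gives $\mathbb{E}[e^{\lambda Y}]\le e^{\psi(\lambda)}$ for a smooth $\psi$ with $\psi(0)=\psi'(0)=0$ and $\psi''(\lambda)\le(c'-c)^{2}/4$, whence a second-order Taylor expansion yields the claim. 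Applied conditionally with $c=-\alpha_{i}$, $c'=\beta_{i}$, so $c'-c=\alpha_{i}+\beta_{i}=:c_{i}$, this gives $\mathbb{E}[e^{\lambda X_{i}}\mid\mathcal{F}_{i-1}]\le e^{\lambda^{2}c_{i}^{2}/8}\le e^{\lambda^{2}c_{i}^{2}/2}$; the last, weaker bound already suffices for the stated constant (it also follows crudely from $|X_{i}|\le c_{i}$).

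Finally I would iterate the peeling from $i=n$ down to $i=1$, each step multiplying in a factor $e^{\lambda^{2}c_{i}^{2}/2}$, to obtain $\mathbb{E}[e^{\lambda X}]\le e^{\lambda^{2}S/2}$ and hence $\mathbb{P}[X>a]\le e^{-\lambda a+\lambda^{2}S/2}$. Minimizing the exponent over $\lambda>0$ at $\lambda=a/S$ gives $\mathbb{P}[X>a]\le e^{-a^{2}/(2S)}$, and combining with the identical bound for $-X$ produces the asserted $2e^{-a^{2}/(2S)}$. The only points requiring care are the correct interpretation of ``martingale'' as a martingale difference sequence, the measurability bookkeeping in the tower-property step, and the short convexity argument for the conditional Hoeffding estimate; I do not expect any genuine obstacle, as the whole argument is elementary and self-contained.
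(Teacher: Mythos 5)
Your proposal is correct: the Chernoff bound plus the conditional Hoeffding lemma, iterated via the tower property and symmetrized by applying the one-sided bound to $(-X_i)$, is a complete and standard derivation, and your reading of ``martingale'' as a martingale difference sequence (so $\mathbb{E}[X]=0$) is the only sensible one and matches how the paper later applies the lemma. There is, however, nothing in the paper to compare against: the lemma is quoted as a ``standard concentration result'' with no proof given (the proof environment that follows it establishes the balancedness Remark, not Azuma), so your argument simply supplies the missing standard proof. Note also that keeping the sharp factor $e^{\lambda^{2}(\alpha_i+\beta_i)^{2}/8}$ from Hoeffding's lemma would yield the stronger tail $2e^{-2a^{2}/\sum_i(\alpha_i+\beta_i)^{2}}$; your deliberate weakening to $e^{\lambda^{2}(\alpha_i+\beta_i)^{2}/2}$ exactly reproduces the constant as stated in the paper.
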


\begin{proof}
Denote by $\tilde{\textbf{x}}^{j}$ an image of $\textbf{x}^{j}$ under transformation $\textbf{HD}$. Note that the $i^{th}$ dimension of $\tilde{\textbf{x}}^{j}$
is given by the formula: $\tilde{x}^{j}_{i} = h_{i,1}x^{j}_{1} + ... + h_{i,n}x^{j,n}$,
where $h_{l,u}$ stands for the $l^{th}$ element of the $u^{th}$ column of the randomized
Hadamard matrix $\textbf{HD}$.
First, we use Azuma's Inequality to find an upper bound on the probability
that $|\tilde{x}^{j}_{i}| > a$, where $a=\frac{\log(n)}{\sqrt{n}}$.
By Azuma's Inequality, we have:
\begin{equation}
\mathbb{P}[|h_{i,1}x^{j}_{1} + ... + h_{i,n}x^{j,n}| \geq a] \leq 2e^{-\frac{\log^{2}(n)}{8}}.
\end{equation}
We use: $\alpha_{i} = \beta_{i} = \frac{1}{\sqrt{n}}$.
Now we take the union bound over all $n$ dimensions and the proof is completed.
\end{proof}

\subsubsection{\textit{Structured Spinners}-equivalent definition}
We will introduce here an equivalent definition of the model of structured spinners that is more technical (thus we did not give it in the main body of the paper), yet more convenient to work with in the proofs.

Note that from the definition of structured spinners we can conclude that each structured matrix $\textbf{G}_{struct} \in \mathbb{R}^{n \times n}$ from the family of structured spinners is a product of three main structured blocks, i.e.:
\begin{equation}
\textbf{G}_{struct} = \textbf{B}_{3}\textbf{B}_{2}\textbf{B}_{1},
\end{equation}

where matrices $\textbf{B}_{1},\textbf{B}_{2},\textbf{B}_{3}$ satisfy two conditions that we give below.

\begin{framed}
\textbf{Condition 1:} Matrices: $\textbf{B}_{1}$ and $\textbf{B}_{2}\textbf{B}_{1}$ are $(\delta(n),p(n))$-balanced isometries. \\
\textbf{Condition 2:} Pair of matrices $(\textbf{B}_{2},\textbf{B}_{3})$
is $(K,\Lambda_{F}, \Lambda_{2})$-random.
\end{framed}

Below we give the definition of $(K, \Lambda_{F}, \Lambda_{2})$-randomness.

\begin{definition}[$(K, \Lambda_{F}, \Lambda_{2})$-randomness]
A pair of matrices $(\textbf{Y},\textbf{Z}) \in \mathbb{R}^{n \times n} \times \mathbb{R}^{n \times n}$ is $(K, \Lambda_{F}, \Lambda_{2})$-random
if there exists $\textbf{r} \in \mathbb{R}^{k}$, and
a set of linear isometries $\phi = \{\phi_{1},...,\phi_{n}\}$, 
where $\phi_i : \mathbb{R}^{n} \rightarrow \mathbb{R}^{k}$, such that:
\begin{itemize}
\item $\textbf{r}$ is either a $\pm 1$-vector with i.i.d. entries
      or Gaussian with identity covariance matrix,
\item for every $\textbf{x} \in \mathbb{R}^{n}$ the $j^{th}$ element $(\textbf{Zx})_{j}$ of $\textbf{Zx}$ is of the form: $\textbf{r}^{T} \cdot \phi_{j}(\textbf{x})$,     
\item there exists a set of i.i.d. sub-Gaussian random variables $\{\rho_{1},...,\rho_{n}\}$ with sub-Gaussian norm at most $K$, mean $0$, the same second moments and a $(\Lambda_{F},\Lambda_{2})$-smooth set of matrices $\{\textbf{W}^{i}\}_{i=1,...,n}$ such that for every $\textbf{x} = (x_{1},...,x_{n})^{T}$, we have: $\phi_{i}(\textbf{Y}\textbf{x}) = \textbf{W}^{i} (\rho_{1}x_{1},...,\rho_{n}x_{n})^{T}$.
\end{itemize}
\end{definition}

\subsubsection{Proof of Lemma \ref{simple_lemma}}

\begin{proof}
Let us first assume the $\textbf{G}_{circ}\textbf{D}_{2}\textbf{HD}_{1}$-setting
(analysis for Toeplitz Gaussian or Hankel Gaussian is completely analogous).
In that setting, it is easy to see that one can take $\textbf{r}$ to be a Gaussian vector (this vector corresponds to the first row of $\textbf{G}_{circ}$). Furthermore linear mappings $\phi_{i}$ are defined as: $\phi_{i}((x_{0},x_{1},...,x_{n-1})^{T}) = (x_{n-i},x_{n-i+1},...,x_{i-1})^{T}$, where operations on indices are modulo $n$.
The value of $\delta(n)$ and $p(n)$ come from the fact that matrix $\textbf{HD}_{1}$ is used as a $(\delta(n),p(n))$-balanced matrix and from Remark \ref{balanceness_remark}.
In that setting, sequence $(\rho_{1},...,\rho_{n})$ is discrete and corresponds to the diagonal of $\textbf{D}_{2}$.
Thus we have: $K = 1$. To calculate $\Lambda_{F}$ and $\Lambda_{2}$, note first that matrix $\textbf{W}^{1}$ is defined as $\textbf{I}$ and subsequent $\textbf{W}^{i}$s are given as circulant shifts of the previous ones (i.e. each row is a circulant shift of the previous row). That observation comes directly from the circulant structure of $\textbf{G}_{circ}$. Thus we have: $\Lambda_{F} = O(\sqrt{n})$ and $\Lambda_{2} = O(1)$. The former is true since each $\textbf{A}^{i,j}$ has $O(n)$ nonzero entries and these are all $1$s. The latter is true since each nontrivial $\textbf{A}^{i,j}$ in that setting is an isometry (this is straightforward from the definition of $\{\textbf{W}^{i}\}_{i=1,...,n}$). Finally, all other conditions regarding $\textbf{W}^{i}$-matrices are clearly satisfied (each column of each $\textbf{W}^{i}$ has unit $L_{2}$ norm and corresponding columns from different $\textbf{W}^{i}$ and $\textbf{W}^{j}$ are clearly orthogonal).

Now let us consider the setting, where the structured matrix is of the form: $\sqrt{n}\textbf{HD}_{3}\textbf{HD}_{2}\textbf{HD}_{1}$.
In that case, $\textbf{r}$ corresponds to a discrete vector (namely, the diagonal of $\textbf{D}_{3}$).
Linear mappings $\phi_{i}$ are defined as:
$\phi_{i}((x_{1},...,x_{n})^{T}) = (\sqrt{n}h_{i,1}x_{1},...,\sqrt{n}h_{i,n}x_{n})^{T}$, where $(h_{i,1},...,h_{i,n})^{T}$ is the $i^{th}$ row of $\textbf{H}$.
One can also notice that the set $\{\textbf{W}^{i}\}_{i=1,...,n}$ is defined as: $w^{i}_{a,b} = \sqrt{n} h_{i,a}h_{a,b}$.
Let us first compute the Frobenius norm of the matrix $\textbf{A}^{i,j}$, defined based on the aforementioned sequence $\{\textbf{W}^{i}\}_{i=1,...,n}$.
We have:
\begin{align}
\|\textbf{A}^{i,j}\|_{F}^{2} & = \sum_{l,t \in \{1,...,n\}} 
(\sum_{k=1}^{n} w^{j}_{k,l}w^{i}_{k,t})^{2} \notag \\
& = n^{2} \sum_{l,t \in \{1,...,n\}} (\sum_{k=1}^{n} h_{j,k}h_{k,l}h_{i,k}h_{k,t})^{2}
\end{align}
To compute the expression above, note first that for $r_{1} \neq r_{2}$ we have:
\begin{align}
\theta & = \sum_{k,l} h_{r_{1},k}h_{r_{1},l}h_{r_{2},k}h_{r_{2},l} \notag \\
 & = \sum_{k} h_{r_{1},k}h_{r_{2},k} \sum_{l} h_{r_{1},l}h_{r_{2}, l} = 0,
\end{align}
where the last equality comes from fact that different rows of $\ {H}$ are orthogonal. From the fact that $\theta = 0$ we get:
\begin{align}
\|\textbf{A}^{i,j}\|_{F}^{2} & = n^{2} \sum_{r=1,...,n} \sum_{k,l} 
h_{i,r}^{2}h_{j,r}^{2}h_{r,k}^{2}h_{r,l}^{2} \notag \\
& = n \cdot n^{2} (\frac{1}{\sqrt{n}})^{8} \cdot n^{2} = n.
\end{align}
Thus we have: $\Lambda_{F} \leq \sqrt{n}$.

Now we compute $\|\textbf{A}^{i,j}\|_{2}$.
Notice that from the definition of $\textbf{A}^{i,j}$ we get that
\begin{equation}
\textbf{A}^{i,j} = \textbf{E}^{i,j} \textbf{F}^{i,j},
\end{equation}
where the $l^{th}$ row of $\textbf{E}^{i,j}$ is of the form
$(h_{j,1}h_{1,l},...,h_{j,n}h_{n,l})$ and the $t^{th}$ column of
$\textbf{F}^{i,j}$ is of the form $(h_{i,1}h_{1,t},...,h_{i,n}h_{n,t})^{T}$.
Thus one can easily verify that $\textbf{E}^{i,j}$ and $\textbf{H}^{i,j}$ are isometries (since $\textbf{H}$ is) thus
$\textbf{A}^{i,j}$ is also an isometry and therefore $\Lambda_{2} = 1$. As in the previous setting, remaining conditions regarding matrices $\textbf{W}^{i}$ are trivially satisfied (from the basic properties of Hadamard matrices).
That completes the proof. \end{proof}

\subsubsection{Proof of Theorem \ref{main_struct_theorem}}

Let us briefly give an overview of the proof before presenting it in detail. Challenges regarding proving accuracy results for structured matrices come from the fact that, for any given $\textbf{x} \in \mathbb{R}^{n}$,
different dimensions of $\textbf{y} = \textbf{G}_{struct}\textbf{x}$ are no longer independent (as it is the case for the unstructured setting).
For matrices from the family of structured spinners we can, however, show that with high probability different elements of $\textbf{y}$ correspond to projections of a given vector $\textbf{r}$ (see Section \ref{sec:model}) into directions that are close to orthogonal. The ``close-to-orthogonality'' characteristic is obtained with the use of the Hanson-Wright inequality that focuses on concentration results regarding quadratic forms involving vectors of sub-Gaussian random variables. If $\textbf{r}$ is Gaussian, then from the well-known fact that projections of the Gaussian vector into orthogonal directions are independent, we can conclude that dimensions of $\textbf{y}$ are ``close to independent''. If $\textbf{r}$ is a discrete vector then we need to show that for $n$ large enough, it ``resembles'' the Gaussian vector. This is where we need to apply the aforementioned techniques regarding multivariate Berry-Esseen-type central limit theorem results.

\begin{proof}
We will use notation from Section \ref{sec:model} and previous sections of the Supplement. 
We assume that the model with structured matrices stacked vertically, each of $m$ rows, is applied. Without loss of generality, we can assume that we have just one block since different blocks are chosen independently.
Let $\textbf{G}_{struct}$ be a matrix from the family of structured spinners. Let us assume that $\textbf{G}_{struct}$ is used by a function $f$ operating in the $d$-dimensional space and let us denote by $\textbf{x}^{1}, \hdots, \textbf{x}^{d}$ some fixed orthonormal basis of that space.
Our first goal is to compute: $\textbf{y}^{1} = \textbf{G}_{struct} \textbf{x}^{1},...,\textbf{y}^{d} = \textbf{G}_{struct} \textbf{x}^{d}$.
Denote by $\tilde{\textbf{x}}^{i}$ the linearly transformed version of $\textbf{x}$ after applying block $\textbf{B}_{1}$, i.e. $\tilde{\textbf{x}}^{i} = \textbf{B}_{1} \textbf{x}^{i}$.
Since $\textbf{B}_{1}$ is $(\delta(n),p(n))$-balanced), we conclude that with probability at least: $p_{balanced} \geq 1 - dp(n)$ each element of each $\tilde{\textbf{x}}^{i}$ has absolute value at most $\frac{\delta(n)}{\sqrt{n}}$. We shortly say that each $\tilde{\textbf{x}}^{i}$ is $\delta(n)$-balanced.
We call this event $\mathcal{E}_{balanced}$.

Note that by the definition of structured spinners, each $\textbf{y}^{i}$ is of the form: 
\begin{equation}
\textbf{y}^{i} = (\textbf{r}^{T} \cdot \phi_{1}(\textbf{B}_{2}\tilde{\textbf{x}}^{i}),...,\textbf{r}^{T} \cdot \phi_{m}(\textbf{B}_{2}\tilde{\textbf{x}}^{i}))^{T}.
\end{equation}

For clarity and to reduce notation, we will assume that $\textbf{r}$ is $n$-dimensional.
To obtain results for vectors $\textbf{r}$ of different dimensionality $D$, it suffices to replace in our analysis and theoretical statements $n$ by $D$.
Let us denote $\mathcal{A} = \{\phi_{1}(\textbf{B}_{2}\tilde{\textbf{x}}^{1}),...,\phi_{m}(\textbf{B}_{2}\tilde{\textbf{x}}^{1}),...,\phi_{1}(\textbf{B}_{2}\tilde{\textbf{x}}^{d}),...,\phi_{m}(\textbf{B}_{2}\tilde{\textbf{x}}^{d}))\}$.
Our goal is to show that with high probability (in respect to random choices of $\textbf{B}_{1}$ and $\textbf{B}_{2}$) for all $\textbf{v}^{i},\textbf{v}^{j} \in \mathcal{A}$, $i \neq j$ the following is true:
\begin{equation}
\label{dot_product_equation}
|(\textbf{v}^{i})^{T} \cdot \textbf{v}^{j}| \leq t
\end{equation}
for some given $0 < t \ll 1$.

Fix some $t>0$. We would like to compute
the lower bound on the corresponding probability.
Let us fix two vectors $\textbf{v}^{1}, \textbf{v}^{2} \in \mathcal{A}$ and denote them as: $\textbf{v}^{1} = \phi_{i}(\textbf{B}_{2}\textbf{x})$, $\textbf{v}^{2} = \phi_{j}(\textbf{B}_{2} \textbf{y})$ for some $\textbf{x} = (x_{1},...,x_{n})^{T}$
and $\textbf{y} = (y_{1},...,y_{n})^{T}$.
Note that we have (see denotation from Section \ref{sec:model}):
\begin{multline}
\phi_{i}(\textbf{B}_{2}\textbf{x}) = (w^{i}_{11}\rho_{1}x_{1} + ...  \\
+ w^{i}_{1,n}\rho_{n}x_{n},...,w^{i}_{n,1}\rho_{1}x_{1} + ... + w^{i}_{n,n}\rho_{n}x_{n})^{T}
\end{multline}
and
\begin{multline}
\phi_{j}(\textbf{B}_{2}\textbf{y}) = (w^{j}_{11}\rho_{1}y_{1} + ... + w^{j}_{1,n}\rho_{n}y_{n},..., \\
w^{j}_{n,1}\rho_{1}y_{1} + ... + w^{j}_{n,n}\rho_{n}y_{n})^{T}.
\end{multline}
We obtain:
\begin{equation}
(\textbf{v}^{1})^{T} \cdot \textbf{v}^{2}=
\sum_{l \in \{1,...,n\}, u\in \{1,...,n\}} \rho_{l}\rho_{u}(\sum_{k=1}^{n}x_{l}y_{u}w^{i}_{k,u}w^{j}_{k,l}).
\end{equation}

We now show that, under assumptions from Theorem \ref{main_struct_theorem}, the expected 
value of the expression above is $0$.
We have:
\begin{equation}
\mathbb{E}[(\textbf{v}^{1})^{T} \cdot \textbf{v}^{2}]=
\mathbb{E}[\sum_{l \in \{1,...,n\}} \rho_{l}^{2}x_{l}y_{l}
(\sum_{k=1}^{n}w^{i}_{k,l}w^{j}_{k,l})],
\end{equation}
since $\rho_{1},...,\rho_{n}$ are independent
and have expectations equal to $0$.
Now notice that if $i \neq j$ then from the assumption that
corresponding columns of matrices $\textbf{W}^{i}$ and $\textbf{W}^{j}$ are orthogonal, we get that the above expectation is $0$. Now assume that $i = j$. But then $\textbf{x}$ and $\textbf{y}$ have to be different and thus they are orthogonal (since they are taken from the orthonormal system transformed by an isometry).
In that setting we get:
\begin{align}
\mathbb{E}[(\textbf{v}^{1})^{T} \cdot \textbf{v}^{2}] & =
\mathbb{E}[\sum_{l \in \{1,...,n\}} \rho_{l}^{2}x_{l}y_{l}
(\sum_{k=1}^{n}(w^{i}_{k,l})^{2})] \notag \\
& = \tau w \sum_{l=1}^{n} x_{l}y_{l} = 0,
\end{align}
where $\tau$ stands for the second moment of each $\rho_{i}$,
$w$ is the squared $L_{2}$-norm of each column of $\textbf{W}^{i}$
($\tau$ and $w$ are well defined due to the properties of structured spinners). The last inequality comes from the fact that $\textbf{x}$ and $\textbf{y}$ are orthogonal.
Now if we define matrices $\textbf{A}^{i,j}$ as in the definition of the model of structured spinners then we see that 
\begin{equation}
(\textbf{v}^{1})^{T} \cdot \textbf{v}^{2} = 
\sum_{l,u \in \{1,...,n\}} \rho_{l}\rho_{u}T^{i,j}_{l,u},
\end{equation}
where:
$T^{i,j}_{l,u} = x_{l}y_{u}A^{i,j}_{l,u}$.

Now we will use the following inequality:
\begin{theorem}[Hanson-Wright Inequality]
Let $\textbf{X} = (X_{1},...,X_{n})^{T} \in \mathbb{R}^{n}$ be a random vector with independent components $X_{i}$ which satisfy: $\mathbb{E}[X_{i}] = 0$ and have sub-Gaussian norm at most $K$ for some given $K>0$.
Let $\textbf{A}$ be an $n \times n$ matrix. Then for every $t \geq 0$
the following is true:
\begin{multline}
\mathbb{P}[\textbf{X}^{T}\textbf{A}\textbf{X} - \mathbb{E}[\textbf{X}^{T}\textbf{AX}] > t] \\
\leq 
2e^{-c \min(\frac{t^{2}}{K^{4}\|\textbf{A}\|^{2}_{F}},
\frac{t}{K^{2}\|\textbf{A}\|_{2}})},
\end{multline}
where $c$ is some universal positive constant.
\end{theorem}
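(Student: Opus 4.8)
The plan is to prove the Hanson--Wright inequality along the now-standard route (following the approach of Rudelson and Vershynin): bound the moment generating function (MGF) of the centered quadratic form after splitting it into its diagonal and off-diagonal parts. Write $S := \textbf{X}^{T}\textbf{A}\textbf{X} - \mathbb{E}[\textbf{X}^{T}\textbf{A}\textbf{X}]$. Since the $X_{i}$ are independent and centered, $\mathbb{E}[\textbf{X}^{T}\textbf{A}\textbf{X}] = \sum_{i} A_{ii}\mathbb{E}[X_{i}^{2}]$, so $S = S_{\mathrm{d}} + S_{\mathrm{o}}$ with $S_{\mathrm{d}} := \sum_{i} A_{ii}(X_{i}^{2}-\mathbb{E}[X_{i}^{2}])$ and $S_{\mathrm{o}} := \sum_{i\neq j} A_{ij}X_{i}X_{j}$, where $S_{\mathrm{o}}$ is already centered. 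By a union bound it suffices to establish a tail bound of the stated form for $S_{\mathrm{d}}$ and for $S_{\mathrm{o}}$ separately, then relabel the universal constant $c$.

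For the diagonal term, each $X_{i}^{2}-\mathbb{E}[X_{i}^{2}]$ is a centered sub-exponential variable with sub-exponential norm $O(K^{2})$ (a routine consequence of $\|X_{i}\|_{\psi_{2}}\leq K$), so $S_{\mathrm{d}}$ is a sum of independent centered sub-exponentials with parameters $|A_{ii}|\cdot O(K^{2})$. Bernstein's inequality then gives $\mathbb{P}[|S_{\mathrm{d}}|>t]\leq 2\exp(-c\min(t^{2}/(K^{4}\sum_{i}A_{ii}^{2}),\, t/(K^{2}\max_{i}|A_{ii}|)))$, which is at least as strong as the target bound since $\sum_{i}A_{ii}^{2}\leq\|\textbf{A}\|_{F}^{2}$ and $\max_{i}|A_{ii}|\leq\|\textbf{A}\|_{2}$.

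The off-diagonal term is the main obstacle. I would first decouple: introducing an independent copy $\textbf{X}'$ of $\textbf{X}$, the de la Pe\~{n}a--Gin\'{e} decoupling inequality (applied to the convex function $e^{\lambda\,\cdot}$) gives $\mathbb{E}\exp(\lambda S_{\mathrm{o}})\leq\mathbb{E}\exp(C\lambda\,\tilde{S})$ with $\tilde{S}:=\textbf{X}^{T}\textbf{A}\textbf{X}' = \langle\textbf{X},\textbf{A}\textbf{X}'\rangle$. Conditioning on $\textbf{X}'$, the quantity $\tilde{S}$ is a linear combination of the independent sub-Gaussian coordinates of $\textbf{X}$, hence $\mathbb{E}[\exp(\lambda\tilde{S})\mid\textbf{X}']\leq\exp(C'\lambda^{2}K^{2}\|\textbf{A}\textbf{X}'\|_{2}^{2})$. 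It remains to take expectation over $\textbf{X}'$, i.e. to bound $\mathbb{E}\exp(\mu\,(\textbf{X}')^{T}\textbf{B}\textbf{X}')$ with $\textbf{B}:=\textbf{A}^{T}\textbf{A}$ and $\mu:=C'\lambda^{2}K^{2}$; this is again a quadratic form, whose MGF I would control by a Gaussian-comparison bound of the shape $\mathbb{E}\exp(\mu\,(\textbf{X}')^{T}\textbf{B}\textbf{X}')\leq\exp(C''\mu\operatorname{tr}(\textbf{B})+C''\mu^{2}K^{4}\|\textbf{B}\|_{F}^{2})$ valid for $|\mu|\leq c/(K^{2}\|\textbf{B}\|_{2})$. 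Using $\operatorname{tr}(\textbf{A}^{T}\textbf{A})=\|\textbf{A}\|_{F}^{2}$, $\|\textbf{A}^{T}\textbf{A}\|_{F}\leq\|\textbf{A}\|_{2}\|\textbf{A}\|_{F}$ and $\|\textbf{A}^{T}\textbf{A}\|_{2}=\|\textbf{A}\|_{2}^{2}$, these assemble into $\mathbb{E}\exp(\lambda S_{\mathrm{o}})\leq\exp(C'''\lambda^{2}K^{4}\|\textbf{A}\|_{F}^{2})$ for all $|\lambda|\leq c'/(K^{2}\|\textbf{A}\|_{2})$.

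A Chernoff bound finishes it: $\mathbb{P}[S_{\mathrm{o}}>t]\leq\exp(-\lambda t+C'''\lambda^{2}K^{4}\|\textbf{A}\|_{F}^{2})$, and optimizing $\lambda$ over $[0,c'/(K^{2}\|\textbf{A}\|_{2})]$ produces the sub-Gaussian rate $t^{2}/(K^{4}\|\textbf{A}\|_{F}^{2})$ for small $t$ and the sub-exponential rate $t/(K^{2}\|\textbf{A}\|_{2})$ for large $t$, i.e. exactly the claimed $\min(\cdot,\cdot)$; a symmetric argument handles $-S_{\mathrm{o}}$, and combining with the diagonal estimate and relabeling constants completes the proof. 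The delicate point throughout is bookkeeping the admissible range of $\lambda$: the decoupling step, the conditional sub-Gaussian estimate, and the recursive quadratic-form MGF bound each impose a restriction, and one must verify that the single range $|\lambda|\lesssim 1/(K^{2}\|\textbf{A}\|_{2})$ simultaneously satisfies all of them — this is precisely what forces the interplay between $\|\textbf{A}\|_{F}$ and $\|\textbf{A}\|_{2}$ in the final tail bound.
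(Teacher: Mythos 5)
The paper does not actually prove this statement: it is imported verbatim as a known external result (the Hanson--Wright inequality of Rudelson and Vershynin) and is only \emph{used} inside the proofs of Theorems \ref{main_struct_theorem} and \ref{neural_theorem}, so there is no in-paper argument to compare yours against. On its own merits, your sketch is the standard modern proof and is sound: the diagonal/off-diagonal split, Bernstein for the sub-exponential diagonal part with $\sum_i A_{ii}^2\le\|\textbf{A}\|_F^2$ and $\max_i|A_{ii}|\le\|\textbf{A}\|_2$, de la Pe\~na--Gin\'e decoupling of the off-diagonal chaos, and a Chernoff bound with $\lambda$ restricted to $|\lambda|\lesssim 1/(K^2\|\textbf{A}\|_2)$ are exactly the right ingredients, and your closing remark about reconciling the admissible $\lambda$-ranges is indeed where the $\min(\cdot,\cdot)$ comes from. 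The one place you deviate slightly from Rudelson--Vershynin is in handling the decoupled chaos: they replace \emph{both} vectors by Gaussians and compute the Gaussian chaos MGF via the SVD, whereas you condition on $\textbf{X}'$, use the sub-Gaussian MGF of the linear form in $\textbf{X}$, and then invoke an MGF bound for the PSD quadratic form $(\textbf{X}')^T\textbf{A}^T\textbf{A}\textbf{X}'$. That variant works and is not circular (the quadratic-form MGF lemma follows by introducing a Gaussian to linearize $\|\textbf{A}\textbf{X}'\|_2^2$, swapping expectations, and diagonalizing), but you should state it with the correct normalization: the trace term should carry the second moments, i.e.\ $\exp\bigl(C\mu K^2\operatorname{tr}(\textbf{B})+C\mu^2K^4\|\textbf{B}\|_F^2\bigr)$ for $|\mu|\le c/(K^2\|\textbf{B}\|_2)$, since $\mathbb{E}[(X_i')^2]\lesssim K^2$ and $K$ need not be at least $1$; with that fix the bookkeeping assembles into $\exp(C\lambda^2K^4\|\textbf{A}\|_F^2)$ as you claim, and the proof is complete.
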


Note that, assuming $\delta(n)$-balancedness, we have: $\|\textbf{T}^{i,j}\|_{F} \leq \frac{\delta^{2}(n)}{n} \|\textbf{A}^{i,j}\|_{F}$ and $\|\textbf{T}^{i,j}\|_{2} \leq
\frac{\delta^{2}(n)}{n}
\|\textbf{A}^{i,j}\|_{2}$.

Now we take $\textbf{X} = (\rho_{1},...,\rho_{n})^{T}$ and $\textbf{A} = \textbf{T}^{i,j}$ in the theorem above.
Applying the Hanson-Wright inequality in that setting,
taking the union bound over all pairs of different vectors $\textbf{v}^{i},\textbf{v}^{j} \in \mathcal{A}$ (this number is exactly: ${md \choose 2}$) and the event $\mathcal{E}_{balanced}$, finally taking the union bound over all $s$ functions $f_{i}$, we conclude that with probability at least: 
\begin{multline}
\label{imp_equation}
p_{good} = 1 - p(n)ds \\
- 2{md \choose 2}s e^{-\Omega(\min(\frac{t^{2}n^{2}}{K^{4}\Lambda_{F}^{2}\delta^{4}(n)}, \frac{tn}{K^{2}\Lambda_{2} \delta^{2}(n)}))}
\end{multline}
for every $f$ any two different vectors $\textbf{v}^{i}, \textbf{v}^{j} \in \mathcal{A}$ satisfy:
$|(\textbf{v}^{i})^{T} \cdot \textbf{v}^{j}| \leq t$.

Note that from the fact that $\textbf{B}_{2}\textbf{B}_{1}$ is $(\delta(n),p(n))$-balanced and from Equation \ref{imp_equation}, we get that with probability at least:
\begin{multline}
p_{right} = 1 - 2p(n)ds \\
- 2{md \choose 2}s e^{-\Omega(\min(\frac{t^{2}n^{2}}{K^{4}\Lambda_{F}^{2}\delta^{4}(n)}, \frac{tn}{K^{2}\Lambda_{2} \delta^{2}(n)}))}.
\end{multline}
for every $f$ any two different vectors $\textbf{v}^{i}, \textbf{v}^{j} \in \mathcal{A}$ satisfy:
$|(\textbf{v}^{i})^{T} \cdot \textbf{v}^{j}| \leq t$ and furthermore each $\textbf{v}^{i}$ is $\delta(n)$-balanced.

Assume now that this event happens.
Consider the vector 
\begin{equation}
\textbf{q}^{\prime} = ((\textbf{y}^{1})^{T},...,(\textbf{y}^{d})^{T})^{T} \in \mathbb{R}^{md}. 
\end{equation}
Note that $\textbf{q}^{\prime}$ can be equivalently represented as:
\begin{equation}
\textbf{q}^{\prime} = 
(\textbf{r}^{T} \cdot \textbf{v}^{1},...,\textbf{r}^{T} \cdot \textbf{v}^{md}),
\end{equation}
where: $\mathcal{A} = \{\textbf{v}^{1},...,\textbf{v}^{md}\}$.
From the fact that $\phi_{i}\textbf{B}_{2}$ and $\textbf{B}_{1}$ are isometries we conclude that: $\|\textbf{v}^{i}\|_{2} = 1$ for $i=1,...$. 

Now we will need the following Berry-Esseen type result for random vectors:

\begin{theorem}[Bentkus \cite{bentkus2003dependence}]
\label{clt_theorem}
Let $\textbf{X}_{1},...,\textbf{X}_{n}$ be independent vectors taken from $\mathbb{R}^{k}$ with common mean $\mathbb{E}[\textbf{X}_{i}] = 0$. Let $\textbf{S} = \textbf{X}_{1} + ... + \textbf{X}_{n}$.
Assume that the covariance operator $\textbf{C}^{2} = cov(\textbf{S})$ is invertible. Denote $\beta_{i} = 
\mathbb{E}[\|\textbf{C}^{-1}\textbf{X}_{i}\|_{2}^{3}]$
and $\beta = \beta_{1} + ... + \beta_{n}$.
Let $\mathcal{C}$ be the set of all convex subsets of $\mathbb{R}^{k}$. Denote $\Delta(\mathcal{C}) = \sup_{A \in \mathcal{C}} |\mathbb{P}[S \in A]-\mathbb{P}[Z \in A]|$,
where $Z$ is the multivariate Gaussian distribution with mean $0$ and covariance operator $\textbf{C}^{2}$. Then:
\begin{equation}
\Delta(\mathcal{C}) \leq ck^{\frac{1}{4}} \beta
\end{equation}
for some universal constant $c$.
\end{theorem}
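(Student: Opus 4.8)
The plan is to run the Lindeberg replacement (``swapping'') method adapted to the class of convex sets, resting on two pillars: a smooth surrogate for the indicator of a convex body whose derivatives are controlled by the smoothing radius, and Keith Ball's theorem that the Gaussian surface area of an arbitrary convex body in $\mathbb{R}^{k}$ is at most $c\,k^{1/4}$ --- this is exactly where the $k^{1/4}$ comes from, and it is what lets one beat the $\sqrt{k}$ dependence of classical multivariate Berry--Esseen bounds. As a preliminary reduction I would standardize: replacing each $\textbf{X}_{i}$ by $\textbf{C}^{-1}\textbf{X}_{i}$ turns $\textbf{S}$ into a vector with covariance $\textbf{I}$, turns $Z$ into the standard Gaussian, leaves $\beta_{i}=\mathbb{E}\|\textbf{C}^{-1}\textbf{X}_{i}\|_{2}^{3}$ unchanged, and does not change $\Delta(\mathcal{C})$ because $\mathcal{C}$ is invariant under invertible linear maps. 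So from now on $\operatorname{cov}(\textbf{S})=\textbf{I}$, $\beta=\sum_{i}\mathbb{E}\|\textbf{X}_{i}\|_{2}^{3}$ (and the bound is trivial unless $\beta$ is small, since $\Delta\le 1$), and $\textbf{Y}_{1},\dots,\textbf{Y}_{n}$ are independent Gaussians with $\operatorname{cov}(\textbf{Y}_{i})=\operatorname{cov}(\textbf{X}_{i})$, independent of the $\textbf{X}_{i}$.

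For a convex set $A$ and a radius $\varepsilon>0$ I would fix a smooth $f=f_{A,\varepsilon}:\mathbb{R}^{k}\to[0,1]$ with $f\equiv 1$ on the $\varepsilon$-shrinking $A^{-\varepsilon}$, $f\equiv 0$ off the $\varepsilon$-enlargement $A^{\varepsilon}$, and $\|\nabla^{j}f\|_{\infty}\le c_{j}\varepsilon^{-j}$ for $j\le 3$ (e.g.\ by mollifying the signed distance to $\partial A$); crucially $\nabla^{3}f$ is supported in the thin shell $A^{\varepsilon}\setminus A^{-\varepsilon}$. Sandwiching $\mathbf{1}_{A^{-\varepsilon}}\le f\le \mathbf{1}_{A^{\varepsilon}}$ (and repeating with $A$ replaced by $A^{\pm\varepsilon}$) reduces $|\mathbb{P}[\textbf{S}\in A]-\mathbb{P}[Z\in A]|$ to $|\mathbb{E}f(\textbf{S})-\mathbb{E}f(Z)|$ plus the Gaussian measure of an $\varepsilon$-neighbourhood of a convex boundary, which by Ball's theorem is $O(k^{1/4}\varepsilon)$. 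The term $\mathbb{E}f(\textbf{S})-\mathbb{E}f(Z)$ is then telescoped: with $\textbf{W}_{i}=\textbf{Y}_{1}+\dots+\textbf{Y}_{i-1}+\textbf{X}_{i+1}+\dots+\textbf{X}_{n}$ it equals $\sum_{i}(\mathbb{E}f(\textbf{W}_{i}+\textbf{X}_{i})-\mathbb{E}f(\textbf{W}_{i}+\textbf{Y}_{i}))$; Taylor-expanding $f$ around $\textbf{W}_{i}$ to second order, the terms of order $0,1,2$ cancel in expectation since $\textbf{W}_{i}$ is independent of $\{\textbf{X}_{i},\textbf{Y}_{i}\}$, $\mathbb{E}\textbf{X}_{i}=\mathbb{E}\textbf{Y}_{i}=0$ and $\operatorname{cov}(\textbf{X}_{i})=\operatorname{cov}(\textbf{Y}_{i})$, and the third-order remainder at an intermediate point is bounded by $\tfrac16\|\nabla^{3}f\|_{\infty}(\mathbb{E}\|\textbf{X}_{i}\|^{3}+\mathbb{E}\|\textbf{Y}_{i}\|^{3})$. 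Using the dimension-free inequality $\mathbb{E}\|\textbf{Y}_{i}\|^{3}\le C(\operatorname{tr}\operatorname{cov}\textbf{X}_{i})^{3/2}\le C\,\mathbb{E}\|\textbf{X}_{i}\|^{3}$ and summing over $i$ gives $|\mathbb{E}f(\textbf{S})-\mathbb{E}f(Z)|=O(\varepsilon^{-3}\beta)$, hence
\[
\Delta(\mathcal{C})\;\lesssim\;\varepsilon^{-3}\beta\;+\;k^{1/4}\varepsilon .
\]

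Optimizing this crude bound over $\varepsilon$ only yields a fractional power of $\beta$, so the real work --- and the main obstacle --- is upgrading to the \emph{linear} dependence $k^{1/4}\beta$. The way I would attack it is to sharpen the third-order bookkeeping using that $\nabla^{3}f$ lives on the shell: replace $\|\nabla^{3}f(\textbf{W}_{i}+\theta\textbf{X}_{i})\|$ by $c_{3}\varepsilon^{-3}\mathbf{1}\{\textbf{W}_{i}\in(\partial A)^{2\varepsilon}\}$ (the unbounded part of $\textbf{X}_{i}$ contributing only lower-order terms), and then bound $\mathbb{P}[\textbf{W}_{i}\in(\partial A)^{2\varepsilon}]$ by noticing that $(\partial A)^{2\varepsilon}$ is a difference of two convex sets, hence at most the Gaussian measure of that shell (which is $O(k^{1/4}\varepsilon)$ by Ball) plus the convex-set discrepancy of the \emph{same problem with one summand removed}. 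This produces a self-referential inequality of the schematic shape $\Delta_{n}\lesssim k^{1/4}\varepsilon+\varepsilon^{-2}k^{1/4}\beta+\varepsilon^{-3}\beta\,\Delta_{n-1}$, and --- together with an Edgeworth-type correction carried in the Taylor expansion so that the leading residual is of genuinely higher order in $\beta$ --- iterating this recursion while rebalancing $\varepsilon$ at each pass drives the exponent of $\beta$ to $1$ without degrading the $k^{1/4}$; this is essentially Bentkus' argument. The delicate points I expect to fight are exactly (i) coordinating the smoothing radius, the shell enlargements, and the induction on $n$ so that the accumulated constants stay universal, and (ii) supplying Ball's $k^{1/4}$ Gaussian-surface-area estimate itself, without which the method only delivers the weaker $\sqrt{k}$ dimension dependence.
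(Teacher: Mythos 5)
You should first note the benchmark: the paper does not prove Theorem~\ref{clt_theorem} at all --- it is imported verbatim from \cite{bentkus2003dependence} and used as a black box inside the proof of Theorem~\ref{main_struct_theorem} --- so your attempt can only be measured against Bentkus' own argument, and against that standard it falls short at the decisive step.

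Your first stage is sound and standard: standardizing by $\textbf{C}^{-1}$, smoothing indicators of convex sets at scale $\varepsilon$, Lindeberg telescoping with cancellation of the terms of order $0,1,2$, and Ball's $O(k^{1/4})$ bound on the Gaussian measure of $\varepsilon$-shells of convex boundaries. This correctly yields $\Delta(\mathcal{C}) \lesssim \varepsilon^{-3}\beta + k^{1/4}\varepsilon$, but optimizing $\varepsilon$ gives only a fractional power of $\beta$ (roughly $k^{3/16}\beta^{1/4}$), and the whole content of the theorem is the \emph{linear} dependence on $\beta$. Your treatment of that upgrade is a schematic recursion, $\Delta_{n}\lesssim k^{1/4}\varepsilon+\varepsilon^{-2}k^{1/4}\beta+\varepsilon^{-3}\beta\,\Delta_{n-1}$, which as written cannot close: to make the smoothing term $k^{1/4}\varepsilon$ of the target order you must take $\varepsilon \asymp \beta$, but then the middle term is of order $k^{1/4}\beta^{-1}$, which diverges as $\beta\to 0$, and the coefficient $\varepsilon^{-3}\beta \asymp \beta^{-2}\gg 1$ multiplying $\Delta_{n-1}$ means the iteration amplifies rather than contracts, so no amount of ``rebalancing $\varepsilon$ at each pass'' turns this into $ck^{1/4}\beta$. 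Bentkus' actual proof requires substantially more structure exactly here: truncation of the summands, a smoothing whose scale is tied to the conditional variance of the remaining (partly Gaussian) sum at each replacement step rather than a single global $\varepsilon$, and careful Edgeworth-type corrections inside the induction, with constants tracked so the induction hypothesis is usable. You have identified the right ingredients --- in particular that $k^{1/4}$ enters through Gaussian surface area of convex sets, without which one is stuck at $\sqrt{k}$ --- but the step you defer (``this is essentially Bentkus' argument'') is precisely where the theorem lives, so the proposal has a genuine gap rather than being a proof.
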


Denote: $\textbf{X}_{i} = (r_{i}v^{1}_{i},...,r_{i}v^{k}_{i})^{T}$
for $k=md$, $\textbf{r} = (r_{1},...,r_{n})^{T}$
and $\textbf{v}^{j} = (v^{j}_{1},...,v^{j}_{n})$.
Note that $\textbf{q}^{\prime} = \textbf{X}_{1} + ... + \textbf{X}_{n}$. Clearly we have: $\mathbb{E}[\textbf{X}_{i}] = 0$ (the expectation is taken with respect to the random choice of \textbf{r}).
Furthermore, given the choices of $\textbf{v}^{1},...,\textbf{v}^{k}$, random vectors $\textbf{X}_{1},..,\textbf{X}_{n}$ are independent.

Let us calculate now the covariance matrix of $\textbf{q}^{\prime}$.
We have: 
\begin{equation}
\textbf{q}^{\prime}_{i} = r_{1}v^{i}_{1} + ... + r_{n}v^{i}_{n},
\end{equation}
where: $\textbf{q}^{\prime} = (\textbf{q}^{\prime}_{1},...,\textbf{q}^{\prime}_{k})$.

Thus for $i_{1}, i_{2}$ we have:
\begin{align}
\mathbb{E}[\textbf{q}^{\prime}_{i_{1}} \textbf{q}^{\prime}_{i_{2}}] & = 
\sum_{j=1}^{n} v^{i_{1}}_{j}v^{i_{2}}_{j}\mathbb{E}[r_{j}^{2}] + 2\sum_{1 \leq j_{1} < j_{2} \leq n} v^{i_{1}}_{j_{1}}v^{i_{2}}_{j_{2}} \mathbb{E}[r_{j_{1}}r_{j_{2}}] \notag \\
& = (\textbf{v}^{i_{1}})^{T} \cdot \textbf{v}^{i_{2}},
\end{align}
where the last equation comes from the fact $r_{j}$ are either Gaussian from $\mathcal{N}(0,1)$ or discrete with entries from $\{-1,+1\}$ and furthermore different $r_{j}$s are independent.

Therefore if $i_1=i_2=i$, since each $\textbf{v}^{i}$ has unit $L_{2}$-norm, we have that 
\begin{equation}
\mathbb{E}[\textbf{q}^{\prime}_{i} \textbf{q}^{\prime}_{i}] = 1, 
\end{equation}
and for $i_{1} \neq i_{2}$ we get:
\begin{equation}
|\mathbb{E}[\textbf{q}^{\prime}_{i_{1}} \textbf{q}^{\prime}_{i_{2}}]| \leq t.
\end{equation}
We conclude that the covariance matrix $\Sigma_{\textbf{q}^{\prime}}$ of the distribution $\textbf{q}^{\prime}$ is a matrix with entries $1$ on the diagonal and other entries of absolute value at most $t$.

For $t = o_{k}(1)$ small enough 
and from the $\delta(n)$-balancedness of vectors $\textbf{v}^{1},...,\textbf{v}^{k}$ 
we can conclude that:
\begin{equation}
\mathbb{E}[\|\textbf{C}^{-1}\textbf{X}_{i}\|^{3}_{2}] =
O(\mathbb{E}[\|\textbf{X}_{i}\|^{3}_{2}]) = O(\sqrt{(\frac{k}{n})^{3}}\delta^{3}(n)),
\end{equation}

Now, using Theorem \ref{clt_theorem}, we conclude that
\begin{align}
\sup_{A \in \mathcal{C}} |\mathbb{P}[\textbf{q}^{\prime} \in A] - \mathbb{P}[Z \in A]|& = O(k^{\frac{1}{4}}n \cdot \frac{k^{\frac{3}{2}}}{n^{\frac{3}{2}}} \delta^{3}(n)) \notag \\
& = O(\frac{\delta^{3}(n)}{\sqrt{n}}k^{\frac{7}{4}}),
\end{align}
where $Z$ is taken from the multivariate Gaussian distribution with covariance matrix $\textbf{I} + \textbf{E}$ and $\mathcal{C}$ is the set of all convex sets. 
Now if we apply the above inequality to the pairwise disjoint convex sets $A_{1},...,A_{j}$, where $A_{1} \cup ... \cup A_{j} = f_{i}^{-1}(\mathcal{S})$ and $l \leq b$
(such sets exist form the $b$-convexity of $f^{-1}_{i}(\mathcal{S})$),
take $\eta = \frac{\delta^{3}(n)}{\sqrt{n}}k^{\frac{7}{4}}$, $\epsilon = t = o_{md}(1)$ and take $n$ large enough, the statement of the theorem follows.
\end{proof}

\subsubsection{Proof of Theorem \ref{convex_theorem}}

\begin{proof}
Let us assume that $f_{i}$ is a convex function of $\textbf{q}_{f_{i}}$ (if $f_{i}$ is concave then the proof completely analogous).
For any $t \in \mathbb{R}$ let $\mathcal{S}_{t} = \{\textbf{q}_{f_{i}} : f_{i}(\textbf{q}_{f_{i}}) \leq t\}$ for $f_{i}$
and $\mathcal{S}_{t} = \{\textbf{q}_{f^{\prime}_{i}} : f^{\prime}_{i}(\textbf{q}_{f^{\prime}_{i}}) \leq t\}$ for $f^{\prime}_{i}$. From the convexity assumption we get that $\mathcal{S}_{t}$ is a convex set.
Thus we can directly apply Theorem \ref{main_struct_theorem} and the result
regarding cdf functions follows.
To obtain the result regarding the characteristic functions, 
notice first that we have:
\begin{equation}
\phi_{X}(t) = \int_{-1}^{1}\mathbb{P}[cos(tX) > s]ds + 
i\int_{-1}^{1}\mathbb{P}[sin(tX) > s]ds
\end{equation}
The event $\{cos(tX) > s\}$ for $t \neq 0$ is equivalent to:
$X \in \cup_{I \in \mathcal{I}} I$ for some family of intervals $\mathcal{I}$. 
Similar observation is true for the event $\{sin(tX) > s\}$.

In our scenario, from the fact that $f_{i}$ is bounded, we conclude that the corresponding families $\mathcal{I}$ are finite. Furthermore, the probability of belonging to a particular interval can be expressed by the values of the cdf function in the endpoints of that interval. From this observation and the result on cdfs that we have just obtained, the result for the characteristic functions follows immediately. 
\end{proof}

\subsubsection{Proof of Theorem \ref{corollary_theorem}}

\begin{proof} This comes directly from Theorem \ref{main_struct_theorem} and Lemma \ref{simple_lemma}.
\end{proof}

\subsubsection{Proof of Theorem \ref{hopefully_last_theorem}}

\begin{proof}
For clarity we will assume that the structured matrix consists of just one block of $m$ rows and will compare its performance with the unstructured variant of $m$ rows (the more general case when the structured matrix is obtained by stacking vertically many blocks is analogous since the blocks are chosen independently). 

Consider the two-dimensional linear space $\mathcal{H}$ spanned by $\textbf{x}$ and $\textbf{y}$.
Fix some orthonormal basis $\mathcal{B} = \{\textbf{u}^{1},\textbf{u}^{2}\}$
of $\mathcal{H}$.
Take vectors \textbf{q} and $\textbf{q}^{\prime}$. 
Note that they are $2m$-dimensional, where $m$ is the number of rows of the block used in the structured setting.
From Theorem \ref{corollary_theorem} we conclude that will probability
at least $p_{success}$, where $p_{success}$ is as in the statement of the theorem the following holds for any convex $2m$-dimensional set $A$:
\begin{equation}
|\mathbb{P}[\textbf{q}(\epsilon) \in A] - \mathbb{P}[\textbf{q}^{\prime} \in A]| \leq \eta,
\end{equation}
where $\eta = \frac{\log^{3}(n)}{n^{\frac{2}{5}}}$.
Take two corresponding entries of vectors $\textbf{v}_{\textbf{x},\textbf{y}}^{1}$ and $\textbf{v}_{\textbf{x},\textbf{y}}^{2}$ indexed by a pair $(\textbf{e}_{i}, \textbf{e}_{j})$ for some fixed $i,j \in \{1,...,m\}$ (for the case when the pair is not of the form $(\textbf{e},\textbf{e}_{j})$, but of a general form: $(\pm \textbf{e}_{i},\pm \textbf{e}_{j})$ the analysis is exactly the same).
Call them $p^{1}$ and $p^{2}$ respectively. Our goal is to compute 
$|p^{1} - p^{2}|$.
Notice that $p^{1}$ is the probability that $h(\textbf{x}) = \textbf{e}_{i}$
and $h(\textbf{y}) = \textbf{e}_{j}$ for the unstructured setting and $p^{2}$ is that probability for the structured variant.

Let us consider now the event $E^{1} = \{h(\textbf{x}) = \textbf{e}_{i}
\land h(\textbf{y}) = \textbf{e}_{j}\}$, where the setting is unstructured.
Denote the corresponding event for the structured setting as $E^{2}$.
Denote $\textbf{q} = (q_{1},...,q_{2m})$.
Assume that $\textbf{x} = \alpha_{1} \textbf{u}^{1} + \alpha_{2} \textbf{u}^{2}$ for some scalars $\alpha_{1}, \alpha_{2} > 0$.
Denote the unstructured Gaussian matrix by $\textbf{G}$.
We have:
\begin{equation}
\textbf{Gx} = \alpha_{1}\textbf{G}\textbf{u}^{1} + 
              \alpha_{2}\textbf{G}\textbf{u}^{2}
\end{equation}
Note that we have: $\textbf{G}\textbf{u}^{1} = (q_{1},...,q_{m})^{T}$
and $\textbf{G}\textbf{u}^{2} = (q_{m+1},...,q_{2m})^{T}$.
Denote by $A(\textbf{e}_{i})$ the set of all the points in $\mathbb{R}^{m}$ such that their angular distance to $\textbf{e}_{i}$ is at most the angular distance to all other $m-1$ canonical vectors. Note that this is definitely the convex set.
Now denote:
\begin{multline}
Q(\textbf{e}_{i}) = \{(q_{1},...,q_{2m})^{T} \in \mathbb{R}^{2m} : \\ \alpha_{1} (q_{1},...,q_{m})^{T} + \alpha_{2} (q_{m+1},...,q_{2m})^{T} \in A(\textbf{e}_{i})\}.
\end{multline}
Note that since $A(\textbf{e}_{i})$ is convex, we can conclude that $Q(\textbf{e}_{i})$ is also convex. 
Note that 
\begin{equation}
\{h(\textbf{x}) = \textbf{e}_{i}\} = \{\textbf{q} \in Q(\textbf{e}_{i})\}.
\end{equation}
By repeating the analysis for the event $\{h(\textbf{y}) = \textbf{e}_{j}\}$,
we conclude that:
\begin{equation}
\{h(\textbf{x}) = \textbf{e}_{i} \land h(\textbf{y}) = \textbf{e}_{j} \} = 
\{\textbf{q} \in Y(\textbf{e}_{i},\textbf{e}_{j})\}
\end{equation}
for convex set $Y(\textbf{e}_{i},\textbf{e}_{j}) = Q(\textbf{e}_{i}) \cap Q(\textbf{e}_{j})$.
Now observe that
\begin{equation}
|p^{1}-p^{2}| = 
|\mathbb{P}[\textbf{q} \in Y(\textbf{e}_{i},\textbf{e}_{j})] - 
\mathbb{P}[\textbf{q}^{\prime} \in Y(\textbf{e}_{i},\textbf{e}_{j})]|
\end{equation}
Thus we have:
\begin{multline}
|p^{1} - p^{2}| \leq 
|\mathbb{P}[\textbf{q} \in Y(\textbf{e}_{i},\textbf{e}_{j})] - 
\mathbb{P}[\textbf{q}(\epsilon) \in Y(\textbf{e}_{i},\textbf{e}_{j})]| \\ +
|\mathbb{P}[\textbf{q}(\epsilon) \in Y(\textbf{e}_{i},\textbf{e}_{j})] - 
\mathbb{P}[\textbf{q}^{\prime} \in Y(\textbf{e}_{i},\textbf{e}_{j})]|
\end{multline}
Therefore we have:
\begin{equation}
|p^{1} - p^{2}| \leq |\mathbb{P}[\textbf{q} \in Y(\textbf{e}_{i},\textbf{e}_{j})] - 
\mathbb{P}[\textbf{q}(\epsilon) \in Y(\textbf{e}_{i},\textbf{e}_{j})]| + \eta.
\end{equation}
Thus we just need to upper-bound:
\begin{equation}
\xi = |\mathbb{P}[\textbf{q} \in Y(\textbf{e}_{i},\textbf{e}_{j})] - 
\mathbb{P}[\textbf{q}(\epsilon) \in Y(\textbf{e}_{i},\textbf{e}_{j})]|.
\end{equation}
Denote the covariance matrix of the distribution $\textbf{q}(\epsilon)$
as $\textbf{I} + \textbf{E}$. Note that $\textbf{E}$ is equal to $0$ on the diagonal and the 
absolute value of all other off-diagonal entries is at most $\epsilon$.

Denote $k = 2m$. We have
\begin{align*}
& \xi = \left|A - B \right|, \\
& \mbox{where} \ A = \frac{1}{(2 \pi)^{\frac{k}{2}}\sqrt{\det (I+E)}} \int_{Y(\textbf{e}_{i},\textbf{e}_{j})} 
e^{-\frac{\textbf{x}^{T}(\textbf{I}+\textbf{E})^{-1}\textbf{x}}{2}}d\textbf{x} \\ 
& \mbox{and} \ B = \frac{1}{(2 \pi)^{\frac{k}{2}}} \int_{Y(\textbf{e}_{i},\textbf{e}_{j})} 
e^{-\frac{\textbf{x}^{T}\textbf{x}}{2}}d\textbf{x}.
\end{align*}

Expanding: $(\textbf{I}+\textbf{E})^{-1}  = \textbf{I} - \textbf{E} + \textbf{E}^{2} - ...$, noticing that $|\det(I + E) - 1| = O(\epsilon^{2m})$,
and using the above formula, we easily get:
\begin{equation}
\xi  = O(\epsilon).
\end{equation}
That completes the proof. 
\end{proof}

\subsubsection{$b$-convexity for angular kernel approximation}

Let us now consider the setting, where linear projections are used to approximate angular kernels between paris of vectors via random feature maps. In this case, the linear projection is followed by the pointwise nonlinear mapping, where the applied nonlinear mapping is a sign function.
The angular kernel is retrieved from the Hamming distance between $\{-1,+1\}$-hashes obtained in such a way.
Note that in this case we can assign to each pair $\textbf{x},\textbf{y}$ of vectors from a database a function $f_{\textbf{x},\textbf{y}}$ that outputs the binary vector which length is the size of the hash and with these indices turned on for which the hashes of $\textbf{x}$ and $\textbf{y}$ disagree. Such a  binary vector uniquely determines the Hadamard distance between the hashes. Notice that for a fixed-length hash $f_{\textbf{x},\textbf{y}}$ produces only finitely many outputs.
If $\mathcal{S}$ is a set-singleton consisting of one of the possible outputs, then
one can notice (straightforwardly from the way the hash is created) that $f^{-1}_{\textbf{x},\textbf{y}}(\mathcal{S})$ is an intersection of the convex sets (as a function of $\textbf{q}_{f_{\textbf{x},\textbf{y}}}$). Thus it is convex and thus for sets $\mathcal{S}$ which are singletons we can take $b=1$.

\subsubsection {Proof of Theorem \ref{neural_theorem}}

In this section, we show that by learning vector $\textbf{r} \in \mathbb{R}^{k}$ from the definition above, one can
approximate well any matrix $\textbf{M} \in \mathbb{R}^{m \times n}$ learned by the neural network, providing that the size $k$ or \textbf{r} is large enough in comparison with  the number of projections and the intrinsic dimensionality $d$ of the data $\mathcal{X}$. 

Take the parametrized structured spinner matrix $\textbf{M}_{struct} \in \mathbb{R}^{m \times n}$ with a learnable vector \textbf{r}. Let $\textbf{M} \in \mathbb{R}^{m \times n}$ be a matrix learned in the unstructured setting.

Let $\mathcal{B} = \{\textbf{x}^{1},...,\textbf{x}^{d}\}$ be some orthonormal basis of the linear space, where data $\mathcal{X}$ is taken from.

\begin{proof}
Note that from the definition of the parametrized structured spinner model we can conclude that
with probability at least $p_{1} = 1 - p(n)$ with respect to the choices of $\textbf{M}_{1}$ and $\textbf{M}_{2}$ each $\textbf{M}_{struct}\textbf{x}^{i}$ is of the form:
\begin{equation}
\textbf{M}_{struct}\textbf{x}^{i} = (\textbf{r}^{T} \cdot \textbf{z}_{1}(\textbf{q}^{i}),..., 
 \textbf{r}^{T} \cdot \textbf{z}_{m}(\textbf{q}^{i}))^{T}, 
\end{equation}
where each $\textbf{z}_{j}(\textbf{q}^{i})$ is of the form:
\begin{equation}
\textbf{z}_{j}(\textbf{q}^{i}) = (w^{j}_{1,1}\rho_{1}q^{i}_{1}+w^{j}_{1,n}\rho_{n}q^{i}_{n},...,
 w^{j}_{k,1}\rho_{1}q^{i}_{1}+w^{j}_{k,n}\rho_{n}q^{i}_{n})^{T}
\end{equation}

and $\mathcal{B}^{\prime} = \{\textbf{q}^{1},...,\textbf{q}^{d}\}$ is an orthonormal basis such that: $\|\textbf{q}^{i}\|_{\infty} \leq \frac{\delta(n)}{\sqrt{n}}$ for $i=1,...,n$.

Note that the system of equations: 
\begin{equation}
\textbf{M}^{struct} \textbf{x}^{i} = \textbf{M}\textbf{x}^{i}
\end{equation}

for $i=1,...,d$ has the solution in $\textbf{r}$ if the vectors from the set 
$\mathcal{A}=\{\textbf{z}_{j}(\textbf{q}^{i}):j=1,...,m, i=1,...d\}$ are independent.

Construct a matrix $\textbf{G} \in \mathbb{R}^{md \times k}$, where rows are vectors from $\mathcal{A}$. We want to show that $rank(\textbf{G}) = md$. It suffices to show that
$det (\textbf{G} \textbf{G}^{T}) \neq 0$. Denote $\textbf{B} = \textbf{G} \textbf{G}^{T}$.
Note that $B_{i,j} = (\textbf{v}^{i})^{T} \textbf{v}^{j}$, where $\mathcal{A} = \{\textbf{v}^{1},...,\textbf{v}^{md}\}$.
Take two vectors $\textbf{v}^{a},\textbf{v}^{b} \in \mathcal{A}$.
Note that from the definition of $\mathcal{A}$ we get:
\begin{equation}
(\textbf{v}^{a})^{T}\textbf{v}^{b} = \sum_{l \in \{1,...,n\}, u \in \{1,...,n\}}
\rho_{l}\rho_{u}x_{l}y_{u}(\sum_{s=1}^{k} w^{i}_{s,l}w^{j}_{s,u}) 
\end{equation}
for some $i,j$ and some vectors $\textbf{x}=(x_{1},...,x_{n})^{T}$, $\textbf{y} = (y_{1},...,y_{n})^{T}$. Furthermore, 
\begin{itemize}
\item $i=j$ and $\textbf{x}=\textbf{y}$ if $a=b$, 
\item $\|\textbf{x}\|_{2}=\|\textbf{y}\|_{2}=1$,
\item $\textbf{x}^{T}\textbf{y}=0$ or $\textbf{x}=\textbf{y}$ and $i \neq j$ for $a \neq b$.
\end{itemize}

We also have:

\begin{equation}
\mathbb{E}[(\textbf{v}^{a})^{T}\textbf{v}^{b}] = 
\mathbb{E}[\sum_{l \in \{1,...,n\}} \rho_{l}^{2}x_{l}y_{l}(\sum_{s=1}^{k}w^{i}_{s,l}w^{j}_{s,u})].
\end{equation}

From the previous observations and the properties of matrices $\textbf{W}^{1},...,\textbf{W}^{n}$ we conclude that the entries of the diagonal of $\textbf{B}$ are equal to $1$.
Furthermore, all other entries are $0$ on expectation. Using Hanson-Wright inequality, we conclude that for any $t>0$ we have: $|B_{i,j}| \leq t$ for all $i \neq j$ with probability at least:
$$p_{succ} = 1 - 2p(n)d - 2{md \choose 2}e^{-c \min(\frac{t^{2}n^{2}}{K^{4}\Lambda_{F}^{2}\delta^{4}(n)},
\frac{tn}{K^{2}\Lambda_{2}\delta^{2}(n)})}.$$




If this is the case, we let $\mathbf{\tilde{B}} \in \mathbb{R}^{(md) \times (md)}$ be a matrix with diagonal entries $\mathbf{\tilde{B}}_{i,i}=0$ and off-diagonal entries $\mathbf{\tilde{B}_{i,j}}= -\textbf{B}_{i,j}$. Furthermore, let $\textbf{B}^{*} \in \mathbb{R}^{(md) \times (md)}$ be a matrix with diagonal entries $\textbf{B}^*_{i,i}=0$ and off-diagonal entries $\textbf{B}^*_{i,j}=t$. 

Following a similar argument as in ~\cite{brent}, note that $\textbf{B}^*=t(\textbf{J}-\textbf{I})$ where \textbf{J} is the matrix of all ones (thus of rank $1$) and \textbf{I} is the identity matrix. Then the eigenvalues of $\textbf{B}^*$ are $t(md-1)$ with multiplicity $1$ and $t(0-1)$ with multiplicity $(md-1)$. We, thereby, are able to explicitly compute det($\textbf{I}-\textbf{B}^*$)=$(1-t(md-1))(1+t)^{md-1}$. 

If $\rho(\textbf{B}^*) \leq 1$, we can apply Theorem 1 of ~\cite{brent} by replacing \textbf{F} with $\textbf{B}^*$ and \textbf{E} with $\mathbf{\tilde{B}}$. For the convenience of the reader, we state their theorem here: Let $\textbf{F} \in \mathbb{R}^{n \times n}$ with non-negative entries and $\rho(F) \leq 1$. Let $\textbf{E} \in \mathbb{R}^{n \times n}$ with entries $\mid e_{i,j} \mid \leq f_{i,j}$, then det($\textbf{I}-\textbf{E}$) $\geq$ det($\textbf{I}-\textbf{F}$).

That is: if $\rho(\textbf{B}^*) \leq 1$,
then 
\begin{multline}
det(\textbf{I}-\textbf{B}^*)=(1-t(md-1))(1+t)^{md-1}\\ \leq  det(\textbf{I}-\mathbf{\tilde{B}}) =det(\textbf{B}).
\end{multline}
The final step is to observe that: \\
$\rho(\textbf{B}^*) \leq 1 \iff \max\{\mid t(md-1)\mid , \mid -t \mid\}=t(md-1) \leq 1 \iff t\leq \frac{1}{md-1}$.
Using this result, we, hence, see that $det(\textbf{B}) \geq (1-t(md-1))(1+t)^{md-1} \geq 0$, in particular $det(\textbf{B}) > 0$ for $t=\frac{1}{md}$. That completes the proof.
\end{proof}

\subsubsection{Additional experiments}

This experiment focuses on the Newton sketch approach~\cite{pilanci}, a generic optimization framework. It guarantees super-linear convergence with exponentially high probability for self-concordant functions, and a reduced computational complexity compared to the original second-order Newton method. The method relies on using a sketched 
version of the Hessian matrix, in place of the original one. In the subsequent experiment we show that matrices from the family of strucured spinners can be used for this purpose, thus can speed up several convex optimization problems solvers. 

\begin{figure*}[!htp]
\centering
\includegraphics[width=0.49\columnwidth]{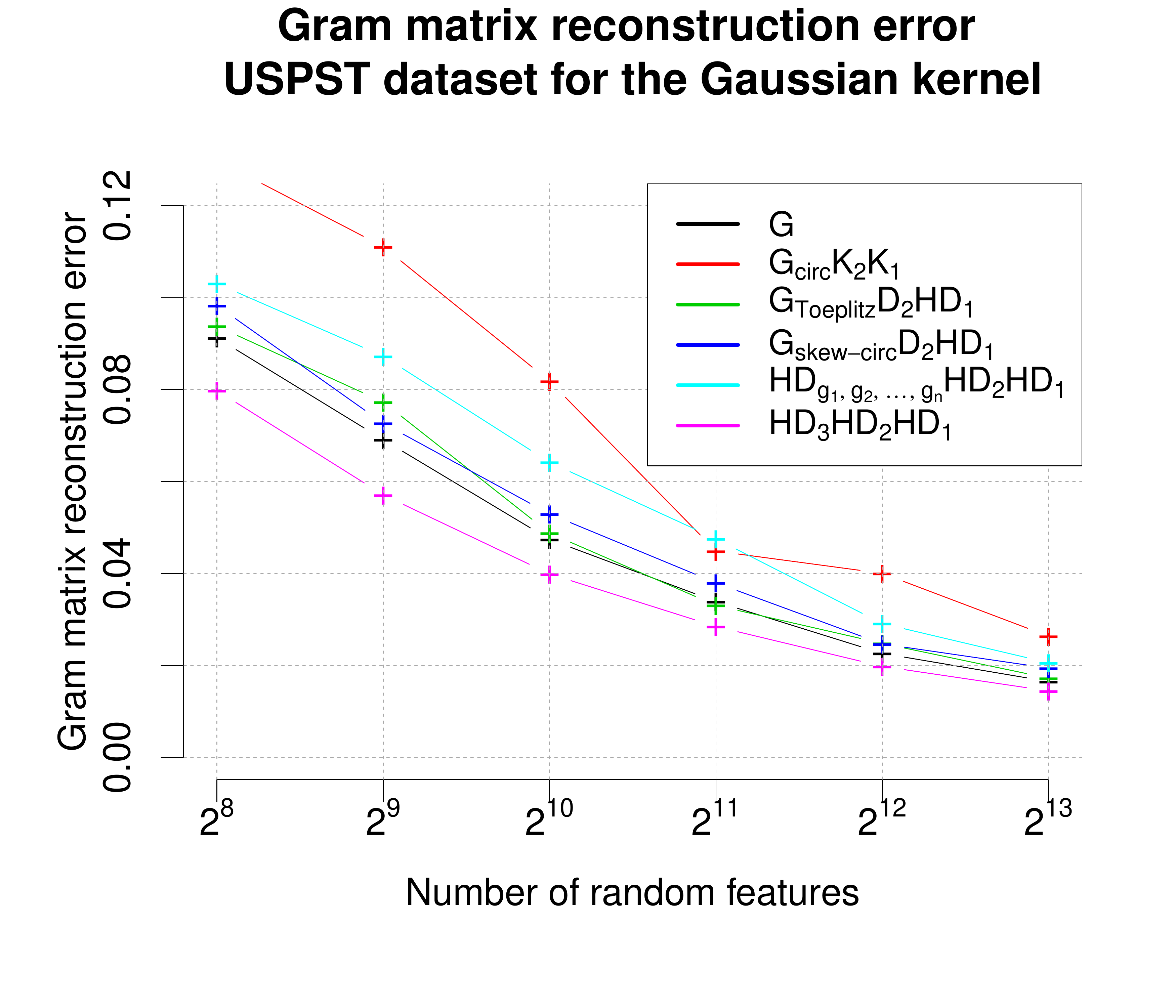} 
\includegraphics[width=0.49\columnwidth]{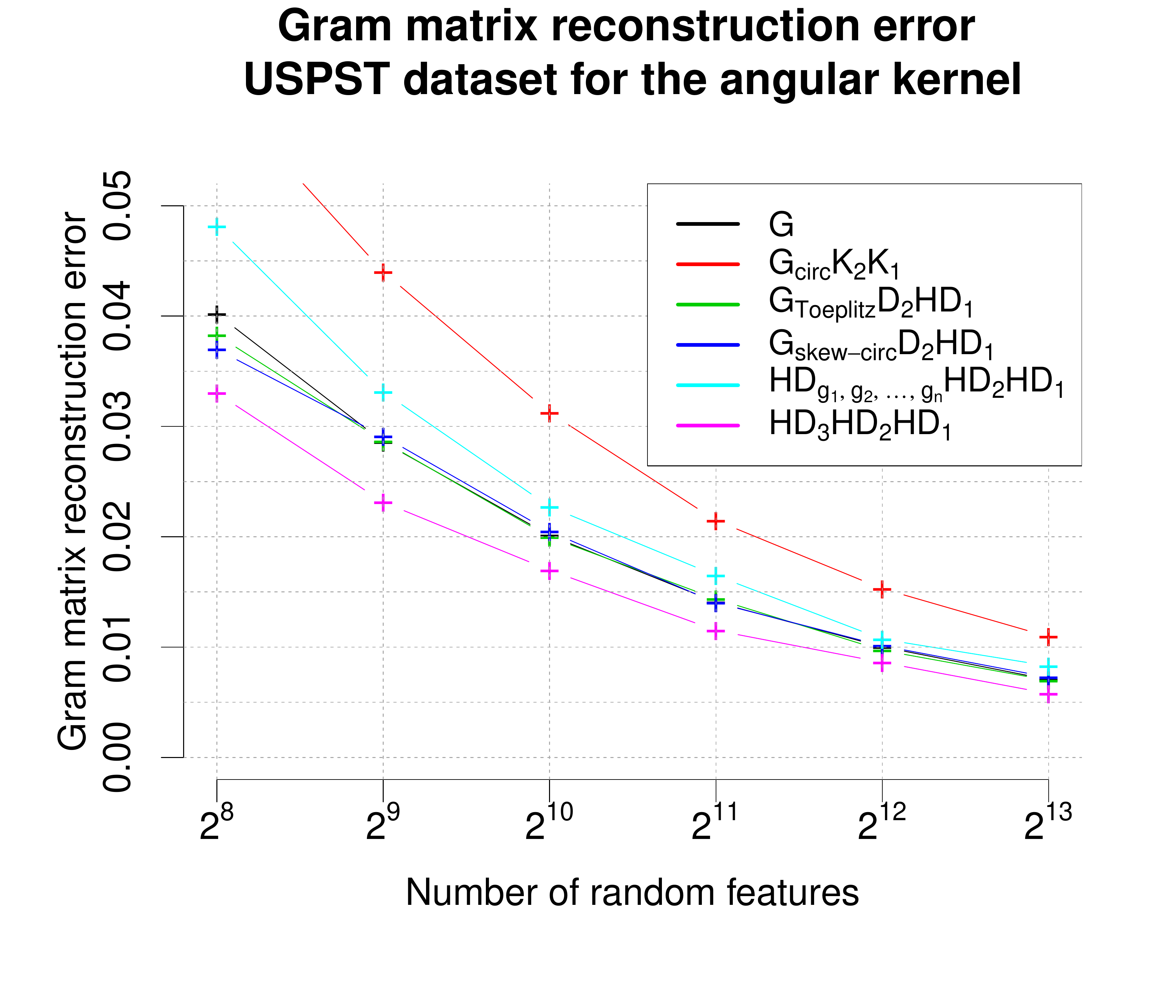}
\vspace{-0.3in}
\caption{Accuracy of random feature map kernel approximation for the USPST dataset.\vspace{-0.05in}}
\label{kernel_approx}
\end{figure*}

\begin{figure}[htp!]
\centering
\begin{subfigure}[b]{0.95\linewidth}
\centering
\includegraphics[width=\columnwidth]{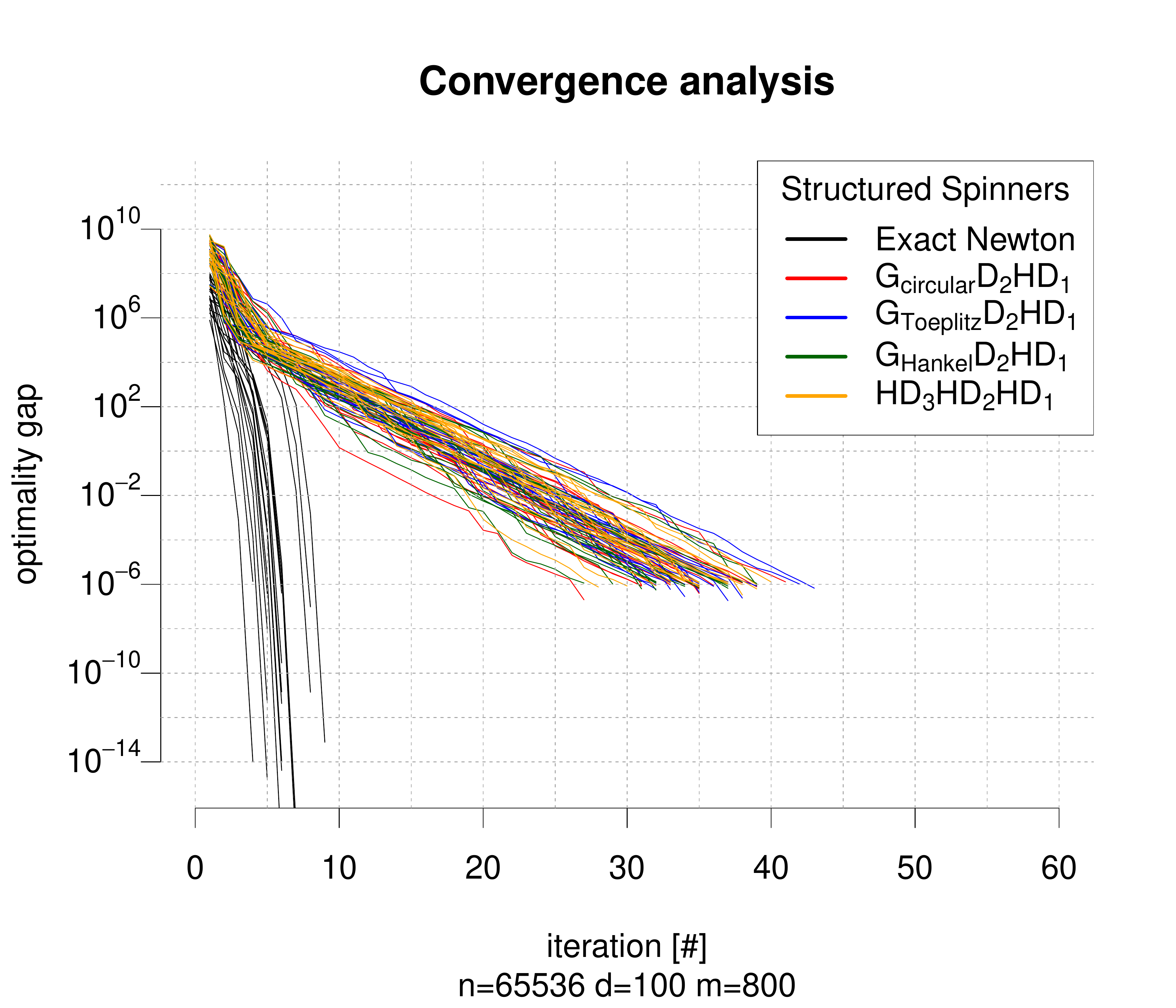}
\label{fig:newtonSketcha}
\end{subfigure}%
\\
\vspace{-0.3in}
\begin{subfigure}[b]{0.95\linewidth}
\centering
\includegraphics[width=\columnwidth]{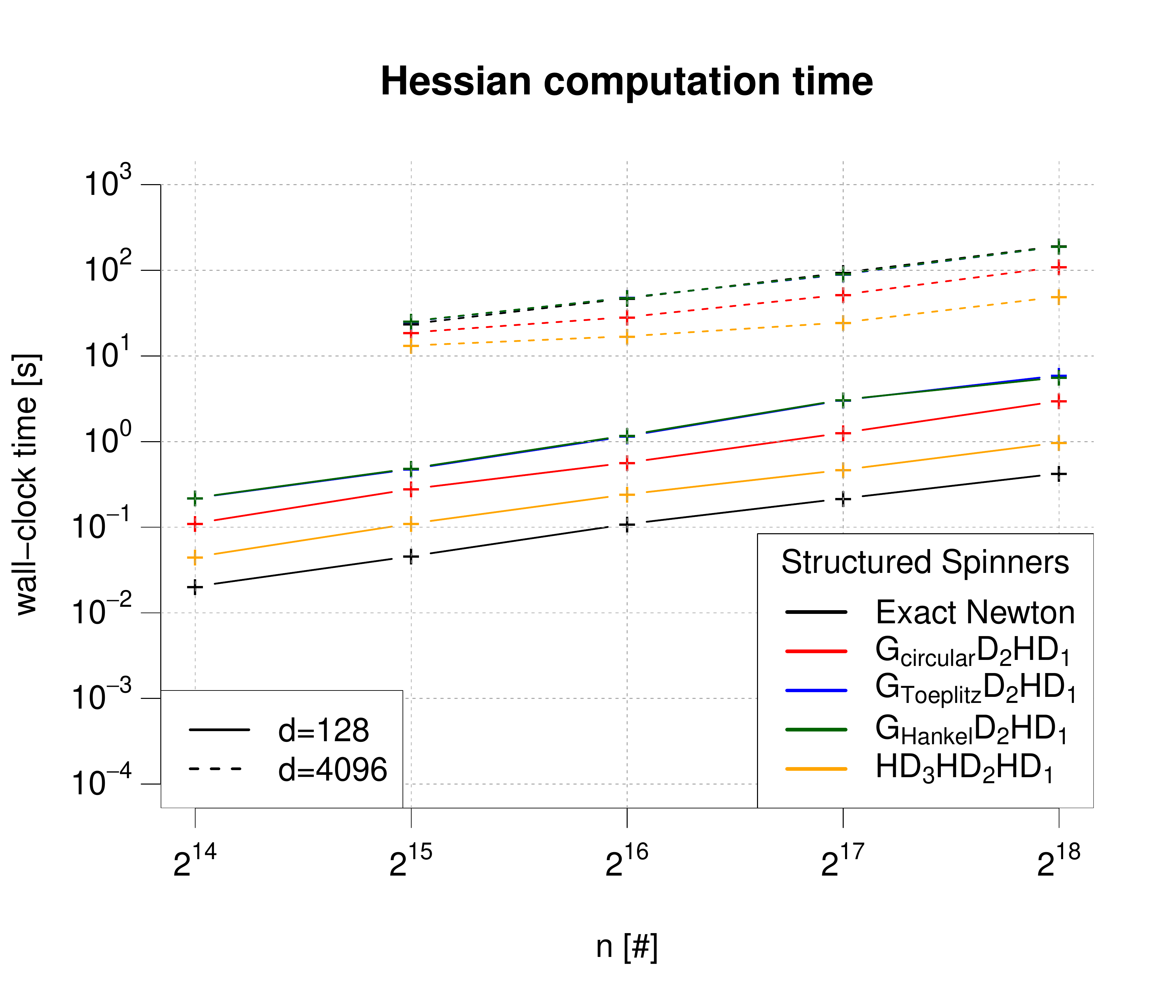}
\label{fig:newtonSketchb}
\end{subfigure}%
\vspace{-0.35in}
\caption{Numerical illustration of the convergence (top) and computational complexity (bottom) of the Newton sketch algorithm with various structured spinners. (left) Various sketching structures are compared in terms of the convergence against iteration number. (bottom) Wall-clock times of structured spinners are compared in various dimensionality settings.\vspace{-0.15in}}
\label{fig:newtonSketchConvergence}
\end{figure}

We consider the unconstrained large scale logistic regression problem, i.e. given a set of $n$ observations $\{(a_i,y_i)\}_{i=1..n}$, with $a_i \in \mathbb{R}^d$ and $y_i \in \{-1,1\}$, find $x \in \mathbb{R}^d$ minimizing the cost function 
$$ f(x) = \sum_{i=1}^n \log (1 + \exp(-y_i a_i^T x)) \enspace .$$
The Newton approach to solving this optimization problem entails solving at each iteration the least squares equation $\nabla^2 f(x^t) \Delta^t = - \nabla f(x^t) $, where

\begin{multline*}
\nabla^2 f(x^t) = \\
A^T \mathrm{diag} \left( \frac{1}{1+\exp(-a_i^T x)} (1 - \frac{1}{1+\exp(-a_i^T x)}) \right) A \\
\in \mathbb{R}^{d \times d}
\end{multline*}
 
is the Hessian matrix of $f(x^t)$, $A = [a_1^T a_2^T \cdots a_n^T] \in \mathbb{R}^{n \times d}$, $\Delta^t = x^{t+1} - x^t$ is the increment at iteration $t$ and $\nabla f(x^t) \in \mathbb{R}^d$ is the gradient of the cost function. In ~\cite{pilanci} it is proposed to consider the sketched version of the least square equation, based on a Hessian square root of $\nabla^2 f(x^t)$, denoted $\nabla^2 f(x^t)^{1/2} = \mathrm{diag} \left( \frac{1}{1+\exp(-a_i^T x)} (1 - \frac{1}{1+\exp(-a_i^T x)}) \right)^{1/2} A \in \mathbb{R}^{n \times d}$. The least squares problem at each iteration $t$ is of the form:
$$ \left( (S^t \nabla^2 f(x^t)^{1/2})^T S^t \nabla^2 f(x^t)^{1/2} \right) \Delta^t = - \nabla f(x^t) \enspace ,$$
where $S^t \in \mathbb{R}^{m \times n}$ is a sequence of isotropic sketch matrices.
Let's finally recall that the gradient of the cost function is
$$ \nabla f(x^t) = \sum_{i=1}^n \left( \frac{1}{1+\exp(-y_i a_i^T x)}-1 \right) y_i a_i \enspace . $$ 

In our experiment, the goal is to find $x \in \mathbb{R}^d$, which minimizes the logistic regression cost, given a dataset $\{(a_i,y_i)\}_{i=1..n}$, with $a_i \in 
\mathbb{R}^d$ sampled according to a Gaussian centered multivariate distribution with covariance $\Sigma_{i,j} = 0.99^{|i-j|}$ and $y_i \in \{-1,1\}$, generated at random. Various sketching matrices $S^t \in \mathbb{R}^{m \times n}$ are considered. 

In Figure~\ref{fig:newtonSketchConvergence} we report the convergence of the Newton sketch algorithm, as measured by the optimality gap defined in~\cite{pilanci}, versus the iteration number. As expected, the structured sketched versions of the algorithm do not converge as quickly as the exact Newton-sketch approach, however various matrices from the family of structured spinners exhibit equivalent convergence properties as shown in the figure.

When the dimensionality of the problem increases, the cost of computing the Hessian in the exact Newton-sketch approach becomes very large~\cite{pilanci}, scaling as $\mathcal{O}(nd^2)$. The complexity of the structured Newton-sketch approach with the matrices from the family of structured spinners is instead only $\mathcal{O}(d n \log(n) + md^2)$. Figure~\ref{fig:newtonSketchConvergence} also illustrates the wall-clock times of computing single Hessian matrices and confirms that the increase in number of iterations of the Newton sketch compared to the exact sketch is compensated by the efficiency of sketched computations, in particular Hadamard-based sketches yield improvements at the lowest dimensions.

\end{document}